\documentclass{article}

    \PassOptionsToPackage{numbers, compress}{natbib}




\usepackage[final]{neurips_2024}

\usepackage[utf8]{inputenc} 
\usepackage[T1]{fontenc}    
\usepackage{hyperref}       
\usepackage{url}            
\usepackage{booktabs}       
\usepackage{amsfonts}       
\usepackage{nicefrac}       
\usepackage{microtype}      
\usepackage{xcolor}         
\usepackage{graphicx}
\usepackage{natbib}
\usepackage{doi}
\usepackage{amsmath}
\usepackage{algorithm}
\usepackage{algorithmic}

\usepackage{amsthm}
\newtheorem{theorem}{Theorem}[section]

\newtheorem{corollary}[theorem]{Corollary}
\newtheorem{definition}[theorem]{Definition}

\usepackage{xcolor}

\usepackage{multirow}

\newcommand{\justification}[1]{#1}

\title{Prompt-Agnostic Adversarial Perturbation for Customized Diffusion Models}
%

\author{%
  Cong Wan \\
  Xi’an Jiaotong University\\
  \texttt{wancong@stu.xjtu.edu.cn} \\
  \And
  Yuhang He \\
  Xi’an Jiaotong University \\
  \texttt{hyh1379478@stu.xjtu.edu.cn} \\
  \AND
  Xiang Song \\
  Xi’an Jiaotong University \\
  \texttt{songxiang@stu.xjtu.edu.cn} \\
  \And
  Yihong Gong \\
  Xi’an Jiaotong University \\
  \texttt{ygong@mail.xjtu.edu.cn} \\
}

\begin{document}

\maketitle

\begin{abstract}
Diffusion models have revolutionized customized text-to-image generation, allowing for efficient synthesis of photos from personal data with textual descriptions. However, these advancements bring forth risks including privacy breaches and unauthorized replication of artworks. Previous researches primarily center around using “prompt-specific methods” to generate adversarial examples to protect personal images, yet the effectiveness of existing methods is hindered by constrained adaptability to different prompts.
In this paper, we introduce a Prompt-Agnostic Adversarial Perturbation (PAP) method for customized diffusion models. PAP first models the prompt distribution using a Laplace Approximation, and then produces prompt-agnostic perturbations by maximizing a disturbance expectation based on the modeled distribution.
This approach effectively tackles the prompt-agnostic attacks, leading to improved defense stability.
Extensive experiments in face privacy and artistic style protection, demonstrate the superior generalization of PAP in comparison to existing techniques. 
Our code will be available at \href{https://github.com/vancyland/Prompt-Agnostic-Adversarial-Perturbation-for-Customized-Diffusion-Models.github.io}{this link}.

\end{abstract}

\section{Introduction} \label{sec:intro}
\vspace{-0.0cm}
Generative methods based on diffusion models~\cite{sohl2015deep,song2019generative,ho2020denoising,song2020score} have made significant improvements in recent years, enabling high quality text-to-image synthesis \cite{dhariwal2021diffusion,zhang2023adding}, image editing \cite{kawar2023imagic}, video generation \cite{yang2023diffusion,blattmann2023stable}, and text-to-3D conversion \cite{poole2022dreamfusion} by \emph{prompt engineering}. 
One of the most representative methods in this field is the Stable Diffusion \cite{rombach2022high,podell2023sdxl}, which is a large-scale text-to-image model. By incorporating customized techniques such as Text Inversion \cite{gal2022image} and DreamBooth \cite{ruiz2023dreambooth}, Stable Diffusion only requires fine-tuning on a few images to accurately generate highly realistic and high-quality images based on the input prompts.

\begin{figure}[t]
\begin{center}
\centerline{\includegraphics[width=0.8\linewidth]{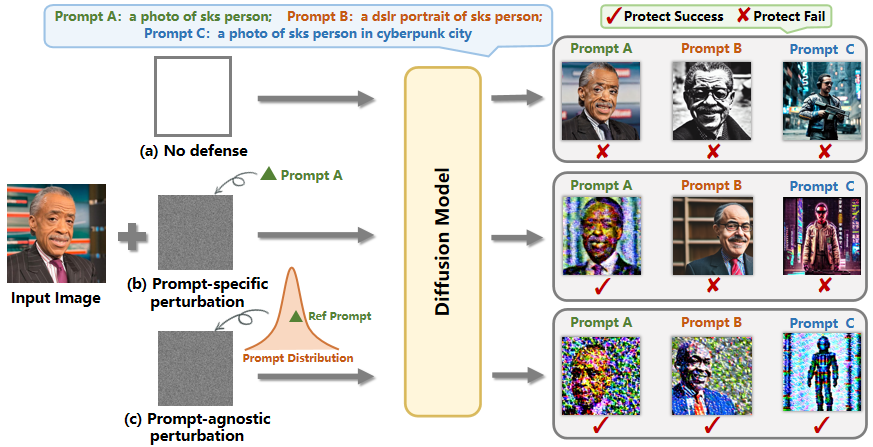}}
\caption{Illustration of a portrait with (a) no defense, (b) prompt-specific and (c) our PAP prompt-agnostic perturbation. In (a), the portrait is easily tampered with by the diffusion model. In (b), the prompt-specific methods only perform well on learned prompts (\emph{i.e.}, Prompt A) and are fruitless to unseen prompts (\emph{i.e.}, Prompts B and C). In (c), the proposed PAP is robust to both the seen and unseen prompts, and successfully protects the portrait from diffusion model tampering}
\label{figure:intro}
\end{center}
\end{figure}

Despite this promising progress, the abuse of these powerful generative methods with wicked exploitation raises wide concerns~\cite{chen2023challenges}, especially in portrait tampering~\cite{wang2023security,luo2024codeswap,xue2024human} and copyright infringement~\cite{fan2023trustworthiness}. For example, in Figure~\ref{figure:intro}(a), given several photos of a person, attackers can utilize diffusion models to generate fake images containing personal information, leading to reputation defamation. Even worse, attackers can easily plagiarize unauthorized artworks using diffusion models, leading to copyright and profit issues. There is an urgent technology need to protect images from diffusion model tampering.

To this end, recent research studies delve into adding human-invisible adversarial perturbations onto the images to prevent prompt-based tampering using diffusion models. The method Photoguard \cite{salman2023raising} maximizes the distance in the VAE latent space. Glaze \cite{shan2023glaze} aims to hinder specific style mimicry, while Anti-DreamBooth \cite{van2023anti}introduces an alternating training approach between the model and adversarial examples. 
The AdvDM series \cite{liang2023adversarial,zheng2023understanding}, Adv-Diff \cite{liu2023adv} present theoretical frameworks and improved methods for attacking LDM. 

These methods follow a prompt-specific paradigm, wherein they need to pre-define and enumerate the possible prompts in training, and the test prompts are required to be identical to the training ones. However, in real-world applications, once encountered with an unseen test prompt, their perturbations are inevitably futile. As shown in Figure~\ref{figure:intro}(b), the perturbation trained with Prompt A fails to protect the portrait on unseen Prompts B and C at inference. 

To meet this challenge, we propose a novel Prompt-Agnostic Adversarial Perturbation (PAP). Different from existing prompt-specific methods that require pre-defining and enumerating the attackers' prompts, PAP models the prompt distribution and generates prompt-agnostic perturbations by maximizing a disturbance expectation based on the prompt distribution.
Specifically, using a Laplace approximation, we derive the prompts distribution in a text-image embedding feature space satisfying a Gaussian distribution, whose mean and variance can be estimated by the second-order Taylor expansion and Hessian approximation with an input reference prompt. Then, by sampling on the Gaussian prompt distribution and maximizing a disturbance expectation, we generate the PAP perturbations for the input images. As shown in Figure~\ref{figure:intro}(c), the PAP perturbation is trained using a Ref Prompt,  while is robust to unseen Prompts B and C at inference. To verify the efficiency of our proposed method, we conduct comprehensive experiments on three widely used benchmark datasets, including VGGFace2, Celeb-HQ and Wikiart. The experimental results show that the proposed PAP method 1) steadily and significantly outperforms existing prompt-specific perturbation on 6 widely used metrics by a large margin and 2) is robust and effective to different diffusion models, attacker prompts, and diverse datasets. These results demonstrate the efficiency and superiority of the proposed PAP method. 


In summary, our contributions are as follows:
\begin{itemize}
\item 
We propose a novel Prompt-Agnostic Adversarial Perturbation (PAP) for customized text-to-image diffusion models. To the best of our knowledge, this is the first attempt at prompt-agnostic perturbation for customized diffusion models.
\item
We model the prompt distribution as a Gaussian distribution with Laplace approximation, where the approximation error is guaranteed. We then derive algorithms to estimate the mean and variance by Taylor expansion and Hessian approximation.

\item 
Based on the prompt distribution modeling, we compute the prompt-agnostic perturbations by maximizing a disturbance expectation via Monte Carlo sampling.
\end{itemize}

\section{Related Work}
\textbf{Text-to-image generation models.} 
In the past decade, there have been significant advancements in text-to-image generation models. Early models, such as cGANs \cite{mirza2014conditional}, introduce label-based control, while diffusion models further advance the field by pushing the boundaries of application generation. Discrete representation methods, like VQ-VAE \cite{van2017neural}, are proposed, and diffusion models undergo continuous improvements.
Transformer architectures have achieved significant advancements in visual tasks \cite{wei2023autoregressive,bai2024artrackv2,gaobeyond,wangnon,zhao2024projecting,peng2024global},
with DALL-E \cite{ramesh2022hierarchical} being a notable example, lead to breakthrough results in text-to-image generation. The performance of diffusion models continued to improve, and subsequent works like GLIDE \cite{nichol2021glide} expand their applications.
Textual Inversion \cite{gal2022image} introduces customized generation by incorporating pseudo-words.
DreamBooth \cite{ruiz2023dreambooth} reconstructs user images using a generic prompt and mitigates overfitting challenges through a prior preservation loss.
ControlNet \cite{zhang2023adding} enhances text-to-image models by introducing extra control options for generating outputs. 
These advancements improves the capabilities of text-to-image models and makes them more accessible to a broader range of users.

\textbf{Gradient-based adversarial attacks.} 
Adversarial attacks deceive machine learning models using perturbed inputs. The Fast Gradient Sign Method (FGSM) \cite{goodfellow2014explaining} maximizes the loss function $L(f_\theta(x+\delta), y)$ based on input gradients, causing misclassifications. This can be formulated as the minmax problem: 
$\min_{\theta}E_{(x,y)\sim D}\max_{\delta}L(f_{\theta}(x+\delta), y).$
Iterative methods like Basic Iterative Method (BIM) \cite{kurakin2018adversarial} and Projected Gradient Descent (PGD) \cite{madry2017towards} enhance attacks through iterative perturbations. Some approaches, such as Momentum Iterative Fast Gradient Sign Method (MI-FGSM) \cite{dong2018boosting}, incorporate momentum to constrain perturbation magnitudes.
Nesterov Iterative (NI) method \cite{lin2019nesterov} uses momentum and Nesterov accelerated gradient to avoid undesired local minimal. Enhanced Momentum Iterative (EMI) method \cite{wang2021boosting} stabilizes the update direction by averaging gradients in the previous iteration's gradient direction. Variance-tuning Momentum Iterative (VMI) method \cite{wang2021enhancing} reduces gradient variance by tuning the current gradient with the neighborhood's gradient variance.
CWA \cite{chen2023rethinking} rethinks local optima resolution by analyzing loss landscape flatness and proximity.

\textbf{Prompt-specific image cloaking for generation.} 
Image cloaking guided by specific prompt assumptions is often employed in privacy protection techniques for generative models. These methods generate adversarial examples specifically targeted at anticipated attacks.
Fawkes \cite{shan2020fawkes} utilized adversarial training to counter face recognition systems, while Lowkey \cite{cherepanova2021lowkey} improved cloaking transferability through ensemble models. Deepfake research \cite{yang2021defending} focused on enhancing cloaking robustness using differentiable image transformations.
In the context of text-to-image diffusion models, efforts have been made to generate privacy-preserving noise. Photoguard \cite{salman2023raising} maximizes the distance in the VAE latent space, Glaze \cite{shan2023glaze} aims to hinder specific style mimicry, and Anti-DreamBooth \cite{van2023anti} introduces an alternating training approach between the model and adversarial samples.
UAP \cite{zhao2023unlearnable} produces unlearnable Examples for Diffusion Models.
The AdvDM series \cite{liang2023adversarial,zheng2023understanding,xue2023toward}, AdvTDM \cite{chen2024exploring}, AdvDiff \cite{liu2023adv} present theoretical frameworks and improved methods for attacking LDM.
Diffprotect \cite{liu2023diffprotect} uses a diffusion autoencoder for FR system perturbations.


In contrast, our work stands apart from these approaches by focusing on the predicted prompt distribution rather than predefined prompt instances, enabling us to achieve global protection efficacy. 

\section{Prompt-Agnostic Adversarial Perturbation} 
\subsection{Background and Motivation}
\textbf{Prompt-based Diffusion Models.} A vanilla diffusion model~\cite{sohl2015deep,ho2020denoising} aims to gradually transfer a simple Gaussian noise into high-quality images through a series of denoising steps. It mainly contains a forward process (\emph{i.e.}, diffusion) and a backward process (\emph{i.e.}, denoising). Begin with an image $x_0$, the forward process iteratively adds Gaussian noise $\epsilon\sim N(0, I)$ with a noise scheduler $1-\alpha_t, \alpha_t \in (0, 1)^T_{t=1}$ to the input image. The $t$-step obtained noised image $x_t$ can be written as:  
\begin{equation}
\label{xt}
x_t = \sqrt{\bar{\alpha}_t}x_0 + \sqrt{1 - \bar{\alpha}_t}\epsilon
\end{equation}
where $\bar{\alpha}_t = \prod_{s=1}^{t} \alpha_s$. When $t\rightarrow \infty$, $x_t$ is a Gaussian noise (\emph{i.e.}, $x_t=\epsilon$). In the backward process, the objective is to learn a noise predictor $\epsilon_{\theta}(x_t,t)$ predicting the added Gaussian noise at each step $t$. On this basis, taking a Gaussian noise (\emph{i.e.}, $x_t(t\rightarrow \infty)$) as input, the diffusion model denoise the image $x_t$ using $\epsilon_{\theta}(\cdot)$, \emph{i.e.}, $x_{t-1}=\frac{1}{\alpha_t}(x_t- \frac{1-\alpha_t}{\sqrt{1-\bar{\alpha}_t}}\epsilon_{\theta}(x_t,t)) + \sigma_t \mathbf{z}, \mathbf{z} \sim N(0, I)$, and iteratively generates a high-quality image $x_0$ by $t$ denoising steps. 


The prompt-based diffusion models~\cite{rombach2022high} aim to generate a semantic guided image $x_0$ from the Gaussian noise $\epsilon$. To this end, in the backward process, they additionally take a text prompt $c$ as the input of noise predictor $\epsilon_{\theta}(x_t,t,c)$ alongside the Gaussian noise $x_t$, and align the image and text representation by cross-attention mechanism. The image generation objective of the prompt-based diffusion models can be written as:
\begin{equation}
\label{eq:im_gen}
\begin{aligned}
\min_{\theta} L_{cond}(x_0, c; \theta) = E_{t,\epsilon \sim \mathcal{N}(0,1)} L(x_0, \epsilon, t, c;\theta), \\
\end{aligned}
\end{equation}
where $L(x_0, \epsilon, t, c;\theta)=\|\epsilon - \epsilon_{\theta}(x_t, t, c)\|^2_2$.



\textbf{Prompt-Specific Perturbation.} 
By collecting a set of characterized images (\emph{e.g.}, images of a certain person) and a custom-built text prompt (\emph{e.g.} "sks"), recent approaches such as DreamBooth~\cite{ruiz2023dreambooth} exploit prompt-based diffusion models for customized image generation. Despite their promising progress, they raise concerns on content falsification such as portrait tampering and copyright infringement. To meet this challenge, existing diffusion-model perturbation methods~\cite{liang2023adversarial,van2023anti} attempt to add an adversarial perturbation $\delta$ to the images, aiming to protect the characterized information of the images (such as a person's face) being falsified. Specifically, they often pre-define a custom text prompt $c_0$, and then optimize the adversarial perturbation $\delta$ to maximize the image generation loss function given $c_0$, which can be written as:

\begin{equation}
\label{4}
\begin{aligned}
\delta^* = &\arg \max_{\delta} 
L_{cond}(x_0+\delta, c_0;\theta), \quad \text{s.t.} \quad |\delta|_p \leq \eta,
\end{aligned}
\end{equation}

where $L_{cond}$ is evaluated according to Eq.(\ref{eq:im_gen}).
By adding the obtained $\delta^*$ to $x_0$, the diffusion models fail to generate high-quality images with the prompt $c_0$.

These methods, as we discussed in Section~\ref{sec:intro}, are insufficient when the text prompts are different from the training prompt. As shown in Figure~\ref{figure:intro} (b), when obtaining a perturbation $\delta$ based on prompt A, the $\delta$ failed to protect the image from being modified using different prompts such as B and C. These methods compute the perturbations based on enumerated text prompt instances, \emph{i.e.}, prompt-specific perturbation, are fruitless facing with endless attack prompts in real-world applications.

\subsection{PAP: Prompt-Agnostic Perturbation by Prompt Distribution Modeling}
Different from the existing methods that compute a prompt-specific perturbation by enumerating prompt instances, we attempt to compute a prompt-agnostic perturbation by prompt distribution modeling, wherein the obtained perturbation is robust to both seen and unseen attack prompts.

We choose the Laplace approximation as our estimator for three key reasons: 1) it typically yields a Gaussian distribution suitable for large sample sizes; 2) it simplifies computations compared to complex methods like Monte Carlo simulations, particularly when analytical forms are difficult to obtain; and 3) it aligns well with our ideal prompt embedding distribution, concentrated around extreme points.

To this end, we first model and compute a prompt distribution by Laplace approximation, wherein two estimators $\phi$ and $\psi$ are developed to compute the distribution parameters. And then we perform Monte Carlo sampling on each input distribution to maximize a disturbance expectation.

Specifically, for the prompt distribution modeling, we consider a protecting image $x_0$ as input and assume a probability-distance correlation between the attacker prompt $c$ and $x_0$, \emph{i.e.}, the further $c$ is from $x_0$, the lower probability of $c$ is in the distribution, and vice versa. 
The distribution relies on $x_0$ is ambiguous, thus we introduce an auxiliary text prompt $c_0$ roughly depicting $x_0$ into the modeling. Based on this foundation, we model the prompt distribution in the embedding space as $c \in Q_{(x_0, c_0)}$, where $Q_{(x_0, c_0)}$ represents the theoretical distribution with a probability density function $p(c|x_0, c_0)$. 
Based on this setup, we approximate the original distribution $Q_{(x_0,c_0)}$ using a Gaussian distribution $\hat{Q}_{(x_0, c_0)}$ by Laplace approximation, \emph{i.e.}, $c \in \hat{Q}_{(x_0, c_0)} \sim \mathcal{N}(c_x, H^{-1})$, where $c_x = \arg \max_c p(c|x_0, c_0)$, and $H$ is the Hessian matrix of $c_x$. 
As we derived in Section~\ref{subsec:lm}, the approximation error is $\mathcal{O}(|c - c_x|^3)$, which is negligible as the sampled $c \in \hat{Q}_{(x_0, c_0)}$ is close to $c_x$.

On this basis, we propose two estimators $\phi$ and $\psi$ used to estimate $c_x$ and $H^{-1}$, respectively. For $\phi$, to compute $c_x = \arg \max_c p(c|x_0, c_0)$ that best describe $x_0$, we approximate $\hat{c}_x = \phi(x_0, \epsilon)$ by minimizing the generation loss in~Eq.~\eqref{eq:im_gen} with momentum iterations starting from $c_0$. This approach accelerates convergence and avoids getting trapped in local minima. For $\psi$, we approximate $\hat{H}^{-1} = \psi(x, \epsilon, c_0, t)$ by performing a Taylor expansion around the flattened $\hat{c}_x$ and incorporating prior information from $c_0$. In Appendixes~\ref{cx} and \ref{dforh}, we have proven that the estimation errors of $c_x$ and $H^{-1}$ are with explicit upper bounds, and more detailed descriptions are provided in Section~\ref{subsec:pe}.




Then, we compute the prompt-agnostic adversarial perturbation $\delta$ by maximizing the expectation of $L_{cond}$ in Eq.(\ref{eq:im_gen}) over the prompt distribution $Q_{(x_0,c_0)}$.
The objective can be formulated as:
\begin{equation}
\label{5}
\begin{aligned}
\delta^* = & \arg \max_{\delta} E_{c \sim Q_{(x_0,c_0)}} L_{cond}(x_0+\delta, c_0;\theta)] \\
=  &\arg \max_{\delta} \int p(c | x_0, c_0) \cdot L_{cond}(x_0+\delta, c_0;\theta) dc, 
 \quad \text{s.t.} \quad |\delta|_p  \leq \eta,
 \end{aligned}
\end{equation}
where $p(c|x_0,c_0)$ is used to represent the probability distribution of $Q_{(x_0,c_0)}$ given the inputs

To optimize Eq.(\ref{5}) from a global perspective, 
we devise a strategy in Section \ref{P-PGD} that utilizes Monte Carlo sampling on all input distributions, including $\hat{Q}_{(x_0, c_0)}$, to maximize the disturbance expectation.




\subsection{Modeling the Prompt Distribution $Q_{(x_0,c_0)}$}
\label{sec:model_Q}
In this subsection, we first approximate the form of $Q_{(x_0, c_0)}$ and then estimate its mean and variance.

\subsubsection{Laplace Modeling} 
\label{subsec:lm}
\begin{definition}
\label{def}
Since $x_0$ and $c_0$ are independent of $c$, we consider $Z = p(x_0,c_0)$ as a constant. Denote $g(c) := p(x_0,c_0|c) \cdot p(c)$, $c_x := \arg\max_c g(c)$, and $H := -\nabla\nabla_c \log g(c)\rvert_{c_x}$ for convenience.
\end{definition}
We adopt Laplace approximation to model $Q_{(x_0,c_0)}$. Using Bayes' theorem, we obtain: 
\begin{equation} 
\label{7}
p(c|x_0,c_0) = \frac{p(x_0,c_0|c) \cdot p(c)}{p(x_0,c_0)}.
\end{equation} 

We then approximate $\log g(c)$ in Definition \ref{def} around $c_x$ using a second-order Taylor expansion:
\begin{equation}
\label{10}
\log g(c) \approx \log g(c_x) - \frac{1}{2}(c-c_x)H(c-c_x)^T.
\end{equation}
From Eq.(\ref{10}) and by ignoring terms that are independent of c, we infer that
\begin{equation} 
\label{11}
p(c|x_0,c_0) \propto \exp\left(-\frac{1}{2}(c-c_x)H(c-c_x)^T\right),
\end{equation} 
which means $p(c|x_0,c_0)$ could be approximated as a normal distribution, \emph{i.e.},
\begin{equation}  
\label{12}
Q_{(x_0,c_0)}(c) \sim \mathcal{N}(c_x, H^{-1}).\end{equation}
The derivation is provided in Appendix~\ref{ap:lm}, wherein the error of the Gaussian approximation is the third-order derivatives of $\log p(x)$ around $c_x$, \emph{i.e.}, $\mathcal{O}(|c - c_x|^3)$.

\subsubsection{Parameter Estimators}
\label{subsec:pe}
\textbf{Estimator $\phi$.}
According to Definition \ref{def},
$c_x$ is defined as the text feature that maximizes the joint probability of $x_0$ and $c_0$.
As directly maximizing the likelihood is untrackable, similar to \cite{chen2023robust}, we convert the likelihood maximization problem into an expectation minimization problem with a proper approximation (please kindly refer to Appendix \ref{cx}) for more details), which can be written as:
\begin{equation}
\label{17}
\hat{c}_x  = \phi(x_0, \epsilon) =\arg\min_c \sum_{t=0}^T L(x_0, \epsilon, t, c;\theta)=\arg\min_c \sum_{t=0}^T \|\epsilon - \epsilon_{\theta}(x_t, t, c)\|^2_2
\end{equation}
To solve for $\hat{c}_x$ in Eq.(\ref{17}) and avoid local minimal, we derive a momentum-based iterative method \cite{dong2018boosting} with the initial value set as the reference prompt $c_0$:
\begin{equation}
\begin{aligned}
m_i = \beta m_{i-1}& + (1-\beta)\nabla_c  L(x, \epsilon, t, c;\theta),\\
c_i& = c_{i-1} - r \cdot m_i,
\end{aligned}
\end{equation}
where $m_i$ represents the momentum term at iteration $i$, and $r$, $\beta$ are learning rates.

\textbf{Estimator $\psi$.} 
To compute $H$ in Definition \ref{def}, we adopt three operations: substituting $-\log g(c)$ with the loss function $L$ in Eq.(\ref{eq:im_gen}), incorporating prior information from $c_0$, and applying the Taylor approximation of $L$ around the flattened $\hat{c}_x$ (Detailed in Appendix \ref{ap:dh}),which enable us to compute:
\begin{equation}
\label{aH}
(c_0 - c_x)^T H (c_0 - c_x) = 2 \cdot (L(x, \epsilon, t, c_0;\theta) - L(x, \epsilon, t, \hat{c}_x;\theta)),
\end{equation}
To obtain $H^{-1}$ from Eq.(\ref{aH}), we simplify the effective dimensionality of $H$ in Appendix \ref{sforh}). This allows us to estimate the $H^{-1}$ using the following expression:
\begin{equation}
\hat{H}^{-1} = \psi(x, \epsilon, c_0, t) =
\frac{||c_0  - \hat{c}_x||_2^2}{2 \cdot (L(x, \epsilon, t, c_0;\theta) - L(x, \epsilon, t, \hat{c}_x;\theta))} I ,
\end{equation}
where $I$ represents the identity matrix and $x$ are input images. As we analyzed in Appendix~\ref{sforh}, the cosine distance between the approximated $\hat{H}^{-1}$ and $H^{-1}$ (diagonalized assumption) is with an upper bound of 0.0909 under our standard experimental settings. 
This simplification significantly reduces the computational complexity with a minor approximation error.

\begin{algorithm}[tb]
   \caption{Prompt-Agnostic Adversarial Perturbation}
   \label{alg:example}
\begin{algorithmic}
   \STATE \textbf{Input:} images $x$, reference prompt $c$, parameter $\theta$, epoch numbers $M$, $N$, learning rates $\alpha$, $r$, $\beta$, budget $\eta$, noise steps $T$, loss function $L(x, \epsilon, t, c;\theta)$ in Eq.(\ref{eq:im_gen})
   \STATE \textbf{Output:} Adversarial examples $x_M$
   \STATE Initialize $c_0 = c$, $x_0 = x$, $m_0=0$, $\epsilon_c \sim N(0,I)$
    \FOR{$j=0$ {\bfseries to} $N-1$}
    \STATE Sample $t_c \in U(0,T)$,
        \STATE Compute gradient $g_c = \nabla_{c_j} L(x_i, \epsilon_c, t_c, c_j;\theta)$
        \STATE Compute momentum $m_{j+1} = \beta m_j + (1-\beta) g_c$ \\
        \STATE Update $c_{j+1} = c_j - r \cdot m_j$
        \ENDFOR
   \FOR{$i=0$ {\bfseries to} $M-1$}
    \STATE Sample $\epsilon \sim N(0,I)$, $t \in U(0,T)$
    \STATE Sample $c \sim N(c_N, \frac{||c_0  - c_x||_2^2}{2 \cdot (L(x_i, \epsilon, t, c_0;\theta) - L(x_i, \epsilon, t, c_N;\theta))} I)$
    \STATE Compute gradient $g_x= \nabla_{x_i} L(x_i, \epsilon, t, c;\theta))$
    \STATE Update $x_{i+1} = \text{clip}_{x_0,\eta}(x_i+\alpha \cdot \text{sgn}(g_x))$  
\ENDFOR 
\end{algorithmic}
\end{algorithm}

\subsection{
Maximizing the disturbance expectation
}
\label{P-PGD}
In this subsection, we devise the strategy of maximizing the disturbance expectation based on the modeled prompt distribution, thereby obtaining the final PAP algorithm outlined in Algorithm \ref{alg:example}.


\textbf{Sampling Distributions for maximization.}
To maximize the optimization objective Eq. (\ref{5}) from a global perspective, we adopt Monte Carlo sampling on all input distributions, including $Q_{(x_0, c_0)}$. Drawing inspiration from established adversarial attack methods \cite{liang2023adversarial,goodfellow2014explaining, carlini2017towards}, we iteratively sample values for $t$, $\epsilon$, and $\epsilon_c$. Subsequently, we perform a gradient ascent step of $L(x, \epsilon, t, c;\theta)$ with respect to $x$, which can be summarized as:
\begin{equation}
\begin{aligned}
x_{i+1} = x_i+\alpha \cdot \text{sgn}(&\nabla_{x_i} L(x_i, \epsilon, t, c;\theta) \big|\epsilon \sim N(0,I), t\in U(0,T), c \sim Q_{(x_0,c_0)}),
\end{aligned}
\end{equation}
where $\text{sgn}(\cdot)$ refers to the sign function, and $\alpha$ controls the step size of the gradient ascent.

\textbf{Further Discussion.}
To further enhance the effectiveness of our proposed algorithm, we seamlessly integrate it with other techniques in Appendix~\ref{appendix:B}, such as ASPL \cite{van2023anti}, to reach a better performance. 
We also discuss modifying the perturbation space with the $\text{tanh}$ function for smoother optimization with better flexibility \cite{carlini2017towards} than clipping-based constraints in Appendix \ref{ppgd}.
Lastly, Appendix~\ref{limits} explores the limitations and future directions of our model.

\section{Experiments}
In this section, we experimentally compare PAP with other protection methods for customized models, specifically targeting DreamBooth, the leading customized text-to-image diffusion model for personalized image synthesis. DreamBooth customizes models by reconstructing images using a generic prompt that includes pseudo-words like "sks," while also addressing overfitting and shifting through a prior preservation loss. We evaluate PAP on various tasks, including privacy and style protection, using diverse datasets.
\subsection{Experimental setup}
\textbf{Datasets}. Our experiments involve face generation tasks using CelebA-HQ \cite{karras2017progressive} and VGGFace2 \cite{chen2020hopskipjumpattack} datasets, as well as style imitation task using the Wikiart dataset \cite{saleh2015large}. For CelebA-HQ and VGGFace2, we select subsets of 600 images respectively, with each of 12 photos from an identical individual. For Wikiart, we choose 100 paintings, with each set consisting of 20 paintings from the same artist.

\textbf{Implementation Details}. We optimize adversarial perturbations over 50 training steps and 20 text sampling steps. The step size of image, text and momentum is set to 1/255 and 0.001, 0.5 respectively, and the default noise budget is 0.05. For the Dreambooth, LoRA ,TI models, we train the models with a learning rate of $5 \times 10^{-7}$ and batch size 4. We perform 1,000 iterations of fine-tuning with SD1.5 as a pretrained model. The PAP method takes approximately 4 to 6 minutes to complete on an NVIDIA A800 GPU with 80GB memory.
To assess defense robustness, we simulate diverse attacker prompts using ten inference sentences in Table \ref{prompt content} that cover a wide range of possibilities.
All experimental results presented below are based on the average metrics obtained from ten sentences at default. 
This approach provides a comprehensive evaluation of overall protection capability with smoother metric variations.

\textbf{Evaluation Metrics}. 
We utilize six metrics, categorized into three aspects. For a more detailed introduction, please refer to Appendix~\ref{metrics}.
\begin{itemize}
\item[1)]\textbf{Image-to-Image Similarity}: We adopt the following metrics to assess image similarity: CLIP Image-to-Image Similarity (CLIP-I) \cite{hessel2021clipscore}, Fréchet Inception Distance (FID), and Learned Perceptual Image Patch Similarity (LPIPS) \cite{zhang2018unreasonable}. Lower CLIP-I ($\downarrow$) and LPIPS ($\downarrow$), and higher FID ($\uparrow$) indicate better defense performance;
\item[2)]\textbf{Text-to-Image Similarity}: CLIP measures the coherence between the test prompt and generated images, with lower scores ($\downarrow$) indicating worse alignment;
\item[3)]\textbf{Image Quality}: BRISQUE \cite{mittal2012no} evaluates image quality using statistical features, with higher scores ($\uparrow$) indicating worse quality. LAION aesthetic predictor~\cite{githubproject} assesses the aesthetic quality of images based on visual features, with lower scores ($\downarrow$) indicating worse aesthetics.
\end{itemize}

\begin{table*}[h]
\scriptsize
\centering \caption{Comparison with other adversarial perturbation methods on the face generation task (including Celeb-HQ and VGGFace2 datasets, training prompt "a photo of sks person") and  style imitation task (including Wikiart dataset, training prompt "a sks painting") using ten different test prompts, where the reported metric values are the average across these ten test prompts.}
 \resizebox{0.95\linewidth}{!}{
 \begin{tabular}{l|l|cccccc}
 \toprule[0.7pt]
\multirow{1}{*}{Dataset} & \multirow{1}{*}{Method} 
 & FID ($\uparrow$) & CLIP-I ($\downarrow$) & LPIPS ($\uparrow$) & LAION ($\downarrow$) & BRISQUE ($\uparrow$) & CLIP ($\downarrow$) \\
 \hline
 \multirow{5}{*}{Celeb-HQ} 
 & Clean & 124.2 & 0.7844 & 0.4776 & 6.082 & 28.12 & 0.3368 \\
 & AdvDM & 217.1 & 0.6728 & 0.5682 & 5.721 & 34.19 & 0.2905 \\
 & Anti-DB & 233.3 & 0.6371 & 0.5924 & 5.497 & 35.89 & 0.2800 \\
 & IAdvDM & 159.1 & 0.6955 & 0.5303 & 5.768 & 31.77 & 0.2699 \\
 & PAP(Ours) & \textbf{249.9} & \textbf{0.5539} & \textbf{0.6730} & \textbf{5.280} & \textbf{36.95} & \textbf{0.2543} \\
\hline 
 \multirow{5}{*}{VGGFace2} 
 & Clean & 230.3 & 0.6567 & 0.5471 & 5.889 & 25.67 & 0.3254\\
 & AdvDM & 243.8 & 0.5931 & 0.6775 & 5.551 & 31.41 & 0.2823 \\
 & Anti-DB & 273.0 & 0.5483 & 0.6960 & 5.334 & 28.95 & 0.2766 \\
 & IAdvDM & 248.6 & 0.5802 & 0.6798 & 5.597 & 32.93 & 0.2835 \\
 & PAP(Ours) & \textbf{288.5} & \textbf{0.5164} & \textbf{0.7023} & \textbf{5.127} & \textbf{35.02} & \textbf{0.2577} \\
\hline 
\multirow{5}{*}{Wikiart} 
& Clean & 198.7 & 0.7715 & 0.6193 & 6.367 & 27.34 & 0.3515 \\
 & AdvDM & 392.7 & 0.6498 & 0.7606 & 5.949 & 33.54 & 0.3026 \\
 & Anti-DB & 386.4 & 0.6462 & 0.7396 & 5.715 & 31.01 & 0.2997 \\
 & IAdvDM & 390.0 & 0.6550 & 0.7149 & 5.996 & 35.30 & 0.3037 \\
 & PAP(Ours) & \textbf{448.3} & \textbf{0.5641} & \textbf{0.7782} & \textbf{5.490} & \textbf{38.47} & \textbf{0.2654} \\
 \bottomrule[0.7pt]
 \end{tabular}
 }
 \label{result}
\end{table*}

\begin{figure*}[ht]
\begin{center}
\centerline{\includegraphics[width=\linewidth]{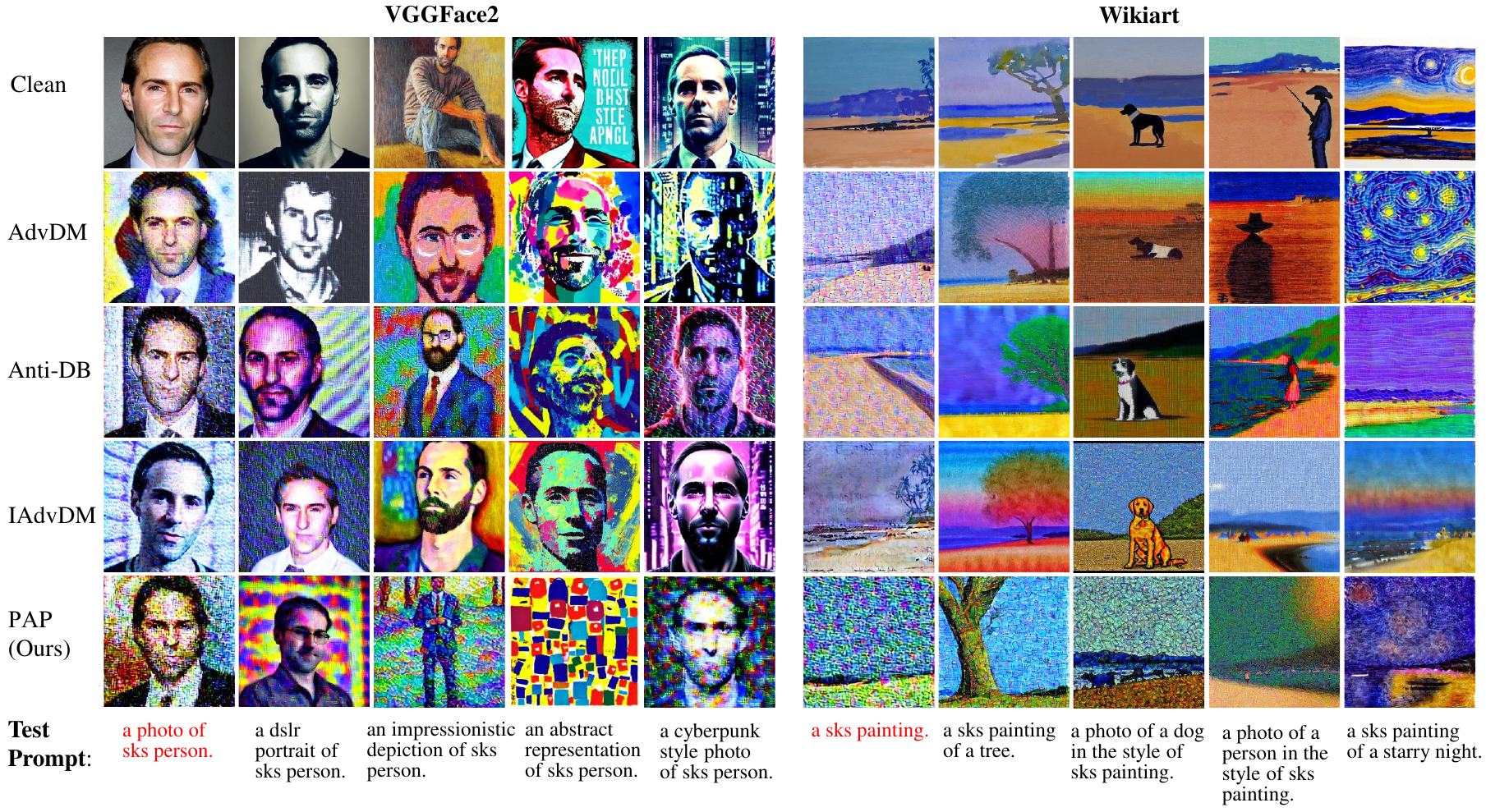}}
\caption{Qualitative defense results of different methods in VGGFace2 (left) and Wikiart (right).
Each row represents a method, and each column represents a different test prompt (shown at the bottom). The adversarial examples generated by our method effectively defend against all prompts in both datasets. In contrast, other baselines primarily focus on protecting the fixed prompt (the first column), resulting in compromised defense for other prompts.
}
\label{view}
 \vspace{-0.8cm}

\end{center}
\end{figure*}

\subsection{Comparison with State-of-the-Art Methods}
\subsubsection{Face Privacy Protection}
\label{FaceP}
We first conduct experiments in preserving face privacy. During training, the reference and training prompt are both set as ``a photo of sks person". Then we use ten prompts related to sks person to evaluate the models' ability to synthesize images. The average results are presented in Table~\ref{result}, showcasing the superior performance of our method in terms of all metrics on both the CelebA-HQ and VGGFace2 datasets. 
For example, on the Celeb-HQ dataset, our method exceeds the second-best Anti-DB by \textbf{13.55\%} in LPIPS, and reduces by \textbf{13.06\%} and \textbf{3.948\%} in CLIP-I and LAION. Also, on the VGGFace2 dataset, our method exceeds the second-best method by \textbf{6.347\%} in BRISQUE, and reduces by \textbf{5.818\%} and \textbf{6.833\%} in CLIP-I and CLIP. 
These results highlight the effectiveness of our method in preserving face privacy as well as robustness to datasets and various prompts' attacks. 
In Figure \ref{view} (left), we visualize some of the comparative protection results.

\subsubsection{Style Imitation}
We also evaluate methods' ability to prevent artistic style imitation using the Wikiart dataset. The reference and training prompt are both set as ``a sks painting". Ten prompts related to the style of “sks” painting are used to evaluate the model's performance and robustness. 
It is observed in Table~\ref{result} that our method achieves a higher FID, LPIPS, BRISQUE and lower CLIP-I, LAION and CLIP compared to existing methods. For instance, the FID and BRISQUE of our method outperforms that of others by at least \textbf{14.16\%} and \textbf{8.980\%} while the CLIP-I and LAION reduce by at least \textbf{12.71\%} and \textbf{3.94\%}
These results demonstrate the effectiveness of our method in preventing style imitation as well as robustness to attacks with different prompts.
In Figure \ref{view} (right), we visualize some of the comparative protection results for Wikiart dataset. 
More visualized results are demonstrated in Appendix \ref{vis}.

\subsection{Ablation Study}
\textbf{Text Sampling Steps.} 
We evaluate PAP under different text sampling steps $N$ (ranging from 0 to 25) used for sampling the prompt $c$ during training. We use the Wikiart dataset and ``a sks painting" prompt to conduct this ablation experiment (see Table~\ref{text}). 
Taking into account the cycle time and model performance, we set the text sampling step $N$ to 15.

\begin{table}[!t]
\scriptsize
\centering
 \caption{Ablation study of text sampling steps.}
 \setlength{\tabcolsep}{0.9mm}
 \begin{tabular}{c|ccccccc}
 \toprule
 Text sampling steps & FID ($\uparrow$) & CLIP-I ($\downarrow$) & LPIPS ($\uparrow$) & LAION (↓) & BRISQUE (↑) & CLIP (↓) & Cycle Time\\
 \hline
 0  & 386.4 & 0.6462 & 0.7396 & 5.715 & 31.01 & 0.2997 & 0.3s   \\
 10 & 430.8 & 0.5873 & 0.7665 & 5.577 & 36.24 & 0.2710 & 5.0s \\
 15 & 448.3 & 0.5641 & 0.7782 & 5.490 & \textbf{38.47} & 0.2654 & 7.4s \\
 20 & 457.5 & 0.5562 & 0.7854 & \textbf{5.466} & 38.33 & 0.2591 & 9.6s \\
 25 & \textbf{462.8} & \textbf{0.5481} & \textbf{0.7901} & 5.508 & 38.37 & \textbf{0.2552} & 11.9s \\
 \bottomrule
 \end{tabular}
 
 \label{text}
\end{table}

\textbf{Inference Prompt Combination.} 
To analyze the effect of prompt variation, we design combinations of prompt categories and quantity: 4$\times$20, 8$\times$10, 10$\times$8, 16$\times$5, 20$\times$4. This keeps the total number of generated images constant while varying the number of prompt categories, allowing us to isolate the effect of prompt categories on the results. As Figure~\ref{promptCQ} shows, PAP consistently surpasses all other comparison methods and maintains a robust defense against changes in prompt categories.

\textbf{Sensitivity to the Initial Prompt.}
In Table \ref{tab:sensitivity_initial_prompt}, we present the outcomes when using the initial prompt "", showcasing a significant decline in performance compared to the original PAP. This decline is attributed to: a) the initial prompt serves as a crucial prior for estimating parameters, facilitating rapid iteration (only 20 steps) to achieve a reliable approximation; b) it is involved in the modeling of \(H\) estimation, where the approximate expression for \(H\) is based on the Taylor expansion modeling of the relevant parameters.

\begin{table}[h]
    \centering
    \caption{Results with the initial prompt ""}
    \label{tab:sensitivity_initial_prompt}
    \begin{tabular}{c|cccccc}
        \toprule[0.7pt]
        Dataset & FID ($\downarrow$) & CLIP-I ($\uparrow$) & LPIPS ($\downarrow$) & LAION ($\downarrow$) & BRISQUE ($\uparrow$) & CLIP ($\downarrow$) \\ 
        \hline
        Celeb-HQ & 154.57 & 0.72 & 0.50 & 5.83 & 30.18 & 0.32 \\ 
        VGGFace2 & 232.34 & 0.61 & 0.60 & 5.65 & 29.20 & 0.29 \\ 
        Wikiart & 320.79 & 0.69 & 0.71 & 5.72 & 31.88 & 0.30 \\ 
        \bottomrule[0.7pt]
    \end{tabular}
\end{table}

\textbf{Pseudo-word.}
In our experiments, we conduct evaluations using the commonly used pseudo-word "sks," which is representative but may not cover all possible cases. To further validate our method, we included additional less commonly used pseudo-words.
Results in Table \ref{t@t} clearly demonstrates that our method consistently outperforms the others with other pseudo-word.

\textbf{Evaluate the approximating \(H\).}
We conduct a simple experiment by directly adding Gaussian noise (with variances 1, 5, 10, and 20) to the input to evaluate the value of approximating \(H\). As shown in Table \ref{tab:baseline_gaussian_noise}, our proposed PAP method, using the variance estimate \(\sigma^2\), achieves the best performance across all metrics. Specifically, it outperforms the second-best method by 3\% (LPIPS), 2\% (FDFR), 3\% (ISM), and 2.27\% (BRISQUE). These findings underscore the necessity of estimating variance \(\sigma^2\) to generate more effective adversarial perturbations.

\begin{table}[h]
    \centering
    \caption{Simple Baseline of Adding Gaussian Noise to the Input on VGGFace2 Dataset}
    \label{tab:baseline_gaussian_noise}
    \begin{tabular}{c|cccccc}
        \toprule[0.7pt]
        Variance & LPIPS ($\downarrow$) & FDFR ($\downarrow$) & ISM ($\uparrow$) & BRISQUE ($\uparrow$) \\ 
        \hline
        1 & 0.67 & 0.64 & 0.38 & 31.92 \\ 
        5 & 0.67 & 0.65 & 0.38 & 32.75 \\ 
        10 & 0.66 & 0.62 & 0.40 & 29.21 \\ 
        20 & 0.64 & 0.60 & 0.44 & 27.01 \\ 
        H & 0.70 & 0.67 & 0.34 & 35.02 \\ 
        \bottomrule[0.7pt]
    \end{tabular}
\end{table}

\textbf{Other Customized Models.} To verify the robustness of proposed methods to fine-tuning methods, we apply PAP to Textual Inversion and DreamBooth with LoRA.
LoRA \cite{hu2021lora}, a widely used efficient low-rank personalization method, poses concerns due to its strong few-shot capability that enables unauthorized artistic style copying.
Textual Inversion \cite{gal2022image} learns customized concepts by simply optimizing a word vector instead of finetuning the full
model. 
Table \ref{table:lora} demonstrates that PAP effectively defends against both methods, highlighting our efficacy in countering various personalization techniques.

\textbf{Noise Budget.} 
We conduct experiments to explore the impact of noise budget $\eta$ on PAP's defense performance (see Table \ref{table:budget} in Appendix \ref{ap:nb}). 
A noise budget of 0.05 is effective and adopted.

\begin{figure*}[ht]
\begin{center}
\centerline{\includegraphics[width=\linewidth]{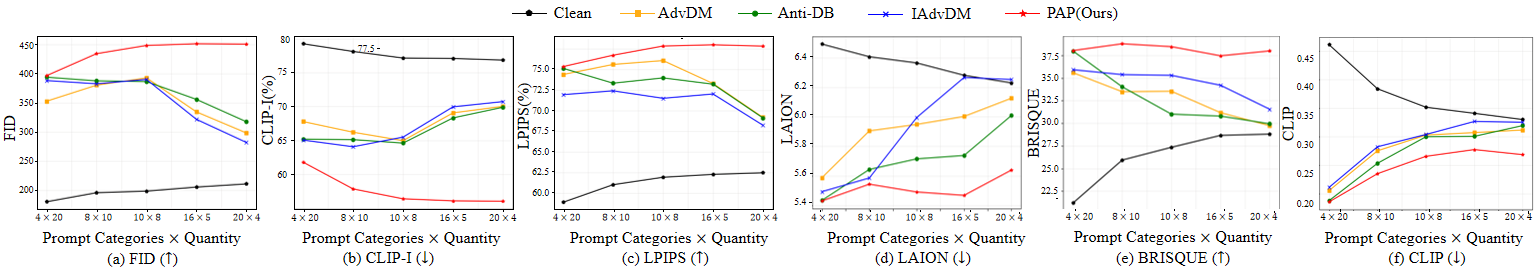}}
\caption{Defense performance of different methods in prompt variation settings. The x-axis represents the number of prompt categories multiplied by the number of generated images per prompt: 4$\times$20, 8$\times$10, 10$\times$8, 16$\times$5, and 20$\times$4. The y-axis displays the values of different metrics. 
}
\label{promptCQ}
\end{center}
\end{figure*}

\begin{table}[t]
\scriptsize
\centering
 \caption{Robustness of our method to different fine-tuning models.}
 \begin{tabular}{l|c|cccccc}
 \toprule[0.7pt]
  & Def.? & FID ($\uparrow$) & CLIP-I ($\downarrow$) & LPIPS ($\uparrow$) 
  & LAION (↓) & BRISQUE (↑) & CLIP (↓)\\
 \hline
 LoRA & $\times$ & 152.5 & 0.7944 & 0.5929 & 6.227 & 18.25 & 0.4379\\
 LoRA & \checkmark & \textbf{375.8} & \textbf{0.6271} & \textbf{0.7673} & \textbf{5.534} & \textbf{41.16} & \textbf{0.2440}\\
 \hline
 \multirow{1}{*}{TI} & $\times$ & 144.3 & 0.8007 & 0.5882 & 6.391 & 13.21 & 0.4533\\
 \multirow{1}{*}{TI} & \checkmark & \textbf{281.8} & \textbf{0.6605} & \textbf{0.7220} & \textbf{5.588} & \textbf{38.94} & \textbf{0.2462}\\
 \bottomrule[0.7pt]
 \end{tabular}
 \label{table:lora}
 \vspace{-0.0cm}
\end{table}
\vspace{-0.0cm}


Please refer to Appendix \ref{ablation} for additional ablation studies.

\subsection{Extending Experiments}
\textbf{DiffPure.} DiffPure \cite{nie2022diffusion} utilizes SDEdit \cite{meng2021sdedit} to purify adversarial images by adding noise and denoising them using diffusion models. 
In Table \ref{R2}, we conduct experiments on the Wikiart dataset using DiffPure  (t=100). 
We can see that,
1) compared to \textit{No Defense}, \textit{PAP+DiffPure} achieves much better adversarial perturbation performance (0.565(↓), 3.38(↑), 0.0408(↓) advances on LAION, BROSQUE and CLIP metrics).
2) Compared to other methods+\textit{DiffPure}, \textit{PAP+DiffPure} still achieves the best performance on all metrics. 

\begin{table}[h]
\scriptsize
\centering
\caption{Performance comparison after applying DiffPure with changes in ().}
\begin{tabular}{l|cccccc}
\hline
& FID (↑) & CLIP-I (↓) & LPIPS (↑) & LAION (↓) & BRISQUE (↑) & CLIP (↓) \\
\hline
AdvDM+DiffPure & 301.22(91.48-) & 0.71(0.06+) & 0.70(\textbf{0.06-}) & 6.217(0.268+) & 28.58(4.96-) & 0.3376(0.0350+) \\
Anti-DB+DiffPure & 335.94(50.46-) & 0.69(0.04+) & 0.68(\textbf{0.06-}) & 5.991(0.276+) & 30.12(0.89-) & 0.3410(0.0413+) \\
IAdvDM+DiffPure & 271.02(\textbf{118.98-}) & 0.72(0.01+) & 0.68(0.03-) & 6.227(0.231+) & 28.00(7.30-) & 0.3489(0.0452+) \\
PAP+DiffPure & \textbf{379.60}(68.70-) & \textbf{0.64}(\textbf{0.08+}) & \textbf{0.72}(\textbf{0.06-}) & \textbf{5.802(0.312+)} & \textbf{30.72(7.75-)} & \textbf{0.3107(0.0453+)} \\
\hline
No Defense & 198.71 & 0.77 & 0.62 & 6.367 & 27.34 & 0.3515 \\
\hline
\end{tabular}
\label{R2}
\end{table}


\textbf{Preprocessing.} 
A recent study \cite{liu2023toward} reveals that current data protections in text-to-image models are fragile and demonstrate limited robustness against data transformations like JPEG compression. To assess the resilience of our proposed Prompt-Agnostic Adversarial Perturbation (PAP) method, we conduct targeted evaluations using the LAION and BRISQUE metrics.
Despite a slight decrease in performance, our method still achieves favorable outcomes in terms of image quality metrics
For a detailed discussion and results, please see Appendix~\ref{jpeg}. 

\nocite{langley00}

\section{Conclusion}
This work mitigates risks from misusing customized text-to-image diffusion models. We introduce subtle perturbations optimized from a modeled prompt distribution, fooling such models for any prompt. Demonstrating resilience against diverse attacks, our framework surpasses prior prompt-specific defenses through robustness gains. By efficiently perturbing content via a distribution-aware method, our contributions effectively safeguard images from diffusion model tampering under unknown prompts.

\bibliographystyle{unsrtnat}
\newpage
\bibliography{references}

\newpage
\appendix
\onecolumn
\section{Derivation of Prompt Distribution Modeling}
\label{appendix:A}

In this section, our goal is to develop an algorithm for modeling the prompt distribution given a reference prompt $c_0$ and protecting images $x_0$. Since the exact form of this distribution is impractical to derive, we employ the idea of Laplace approximation. By considering a second-order Taylor expansion at a critical point, we approximate the distribution near this critical point as a Gaussian distribution. To reduce computational complexity, we make assumptions and approximate estimates for the mean and variance. The algorithm is presented in Algorithm \ref{PDM}.

\begin{algorithm}[h]
   \caption{Prompt Distribution Modeling}
   \label{PDM}
\begin{algorithmic}
   \STATE \textbf{Input:} images $x$, reference prompt $c$, parameter $\theta$, epoch numbers $N$, learning rates $r$, $\beta$, noise $\epsilon$, loss function $L(x, \epsilon, t, c;\theta)$ in Eq.(\ref{eq:im_gen}).
   \STATE \textbf{Output:} Mean $c_N$, square $H^{-1}$
   \STATE Initialize: $c_0 = c$, $x_0 = x$, $m_0 = 0$
    \FOR{$j=0$ {\bfseries to} $N-1$}
   \STATE Sample $t$ 
\STATE Compute gradient $g_c=\nabla_{c_j} L(x, \epsilon, t, c_j;\theta)$
\STATE Compute momentum $m_{j+1} = \beta m_j + (1-\beta) g_c$ \\       
    \STATE Update $c_{j+1} = c_j - r \cdot g$
    \ENDFOR
   \STATE Compute $\hat{H}^{-1}=\frac{||c_0 - c_N)||_2^2}{2 \cdot (L(x, \epsilon, t, c_0;\theta) - L(x, \epsilon, t, c_N;\theta))}$
\end{algorithmic}
\end{algorithm}

In the following, we will provide a mathematical justification for the algorithm's validity.


\subsection{Laplace Approximation Details}
\label{ap:lm}

\subsubsection{Motivation and Basic Idea}
Laplace approximation is a simple yet widely used method for approximating probability distributions, especially in the context of Bayesian inference and machine learning. The main idea behind Laplace approximation is to approximate a target distribution $p(x)$ by a Gaussian distribution $\mathcal{N}(\mu, \Sigma)$, where the mean $\mu$ and covariance $\Sigma$ are determined by matching the extreme point and curvature of the target distribution at its extreme point.

Specifically, let $c_x$ be the maximum point of $p(x)$, i.e., $c_x = \arg\max_x p(x)$. Laplace approximation uses a second-order Taylor expansion of $\log p(x)$ around $c_x$ to construct the approximate Gaussian distribution:

\begin{equation}
\log p(x) \approx \log p(c_x) - \frac{1}{2}(x - c_x)^T H(c_x)(x - c_x) + \mathcal{O}(|x - c_x*|^3),
\end{equation}

where $H(c_x)$ is the Hessian matrix (the matrix of second-order partial derivatives) of $\log p(x)$ evaluated at $c_x$. Exponentiating both sides and ignoring the higher-order terms, we obtain the Laplace approximation:

\begin{equation}
p(x) \approx \mathcal{N}(c_x, -H(c_x)^{-1}).
\end{equation}

\subsubsection{Error Analysis}
The error of the Laplace approximation comes from neglecting the higher-order terms in the Taylor expansion. The approximation error can be quantified by the remainder term $\mathcal{O}(|x - c_x|^3)$, which represents the third and higher-order derivatives of $\log p(x)$ evaluated at some point between $x$ and $c_x$.

More precisely, let $\xi(x)$ denote the remainder term, then the Laplace approximation can be written as:

\begin{equation}
p(x) = \mathcal{N}(c_x, -H(c_x)^{-1}) \exp(\xi(x)),
\end{equation}

where $\xi(x)$ satisfies $\lim_{|x - c_x| \rightarrow 0} \xi(x) / |x - c_x|^3 = 0$. This means that the relative error of the Laplace approximation is $\mathcal{O}(|x - c_x|^3)$ in a neighborhood of $c_x$.

In general, the accuracy of the Laplace approximation depends on the smoothness and behavior of the target distribution $p(x)$ around its mode $c_x$. If $p(x)$ is sufficiently smooth and concentrated around $c_x$, the higher-order terms in the Taylor expansion become negligible, and the Laplace approximation provides a good approximation. However, if $p(x)$ has heavy tails or is multi-modal, the approximation error can be significant, especially in regions far away from $c_x$.

\subsection{Estimation of $c_x$} 
\label{cx}
From Definition \ref{def} and derivations in AdvDM \cite{liang2023adversarial}, we have:
\begin{equation} 
\label{13}
\begin{aligned}
\log g(c) 
&= \log p(c) + \log p(x_0,c_0|c) \\
&= \log p(c) + \mathbb{E}_t [\log p(x_{t-1}|c,x_t)],
\end{aligned}
\end{equation} 
where $x_t$ is defined in Eq.(\ref{xt}). 

Since minimizing the diffusion loss to maximize likelihood is a common practice, we have:
\begin{equation} 
\label{14}
\begin{aligned}
c_x &= \arg\max_{c} \mathbb{E}_t [\log p(x_{t-1} \,|\, c, x_t)] \\
&=\arg\min_c 
\mathbb{E}_{t,\epsilon \sim \mathcal{N}(0,1)} L_{cond}(x_0+\delta, c;\theta).
\end{aligned}
\end{equation} 
To reduce computational cost, we approximate Eq.(\ref{14}) with bounded error, and estimate $c_x$ using a single sample of $\epsilon$ instead of averaging. The detailed derivation and justification can be found in Appendix \ref{appendix:A1} and \ref{appendix:A2}.
The final estimation of $c_x$ can be derived as follows:
\begin{equation}
\label{cx20}
\hat{c}_x = \arg\min_c \sum_{t=0}^T L(x, \epsilon, t, c;\theta).
\end{equation}
Empirical ablation experiments validate the effectiveness of this estimation, as demonstrated in Table \ref{table:ablationstep}.

To solve for $\hat{c}_x$ and avoid local optima, we employ an iterative method with momentum to estimate $c_x$ instead of directly solving for a local minimum.
In this approach, we leverage the ensemble of loss functions from different time steps in Eq. (\ref{eq:im_gen}). Inspired by transferable adversarial attacks \cite{dong2018boosting}, we aim to find a transferable $c_x$ that navigates the diverse loss landscape using the momentum iteration algorithm.
\begin{equation}
\begin{aligned}
m_i = \beta m_{i-1}& + (1-\beta)\nabla_c  L(x, \epsilon, t, c;\theta),\\
c_i &= c_{i-1} - r \cdot m_i.
\end{aligned}
\end{equation}
Here, $m_i$ represents the momentum term at iteration $i$. 
The details of the information about the transferable adversarial attacks can be found in Appendix \ref{landscape}.

Additionally, considering that $c_0$ serves as a reference prompt input that is typically expected to be highly correlated with the content of the image, we assume that $c_0$ and $c_x$ are very close in the textual space. Then the iterative solution for $c_x$ can be initialized from $c_0$. 

\subsubsection{Upper bound of $c_x$ estimation error.}

\label{appendix:A1}

\begin{theorem}
\label{thm:pdm}
Assume $g:\mathbb{R}^{m \times m} \to \mathbb{R}$ is Lipschitz continuous under $L_1$ norm.
Then, as $n \to \infty$, we have 
\begin{equation} 
\label{15}
(\frac{1}{n} \sum_{i=1}^n g(x_i) - g\left(\frac{\sqrt{n}}{n} \sum_{i=1}^n x_i\right)) < 
2L \cdot \sqrt{\frac{2}{\pi}}
\end{equation} 
where $x_i \overset{\text{i.i.d.}}{\sim} \mathcal{N}(0, I)$ for $i=1:n$.
\end{theorem}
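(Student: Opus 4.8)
The plan is to exploit the special structure of the second argument rather than to invoke any delicate limit theorem. First I would observe that the rescaled sum $Z_n := \frac{\sqrt n}{n}\sum_{i=1}^n x_i = \frac{1}{\sqrt n}\sum_{i=1}^n x_i$ is, for \emph{every} finite $n$, exactly standard Gaussian: since the $x_i$ are i.i.d.\ $\mathcal{N}(0,I)$, their sum is $\mathcal{N}(0,nI)$ and dividing by $\sqrt n$ restores $\mathcal{N}(0,I)$. This is the crucial point — it is not merely a CLT limit — because it makes the two terms $\frac1n\sum_i g(x_i)$ and $g(Z_n)$ share the common mean $\mathbb{E}[g(X)]$ with $X\sim\mathcal{N}(0,I)$, so their difference is a genuine fluctuation rather than a drift.

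Next I would bound the difference by inserting a fixed reference value $g(0)$ and applying the triangle inequality,
\begin{equation}
\Big|\tfrac1n\textstyle\sum_i g(x_i) - g(Z_n)\Big| \le \Big|\tfrac1n\textstyle\sum_i g(x_i) - g(0)\Big| + \big|g(0) - g(Z_n)\big|,
\end{equation}
and then convert each piece into an $L_1$ quantity via the Lipschitz hypothesis: the first is at most $\frac{L}{n}\sum_i \|x_i\|_1$ and the second is at most $L\,\|Z_n\|_1$. The remaining work is purely about Gaussian first absolute moments. For the empirical-average term, the strong law of large numbers gives $\frac1n\sum_i\|x_i\|_1 \to \mathbb{E}\|X\|_1$ almost surely as $n\to\infty$, which in the per-coordinate standard-normal case equals $\sqrt{2/\pi}$ because $\mathbb{E}|N| = \sqrt{2/\pi}$ for $N\sim\mathcal{N}(0,1)$. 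For the second term, $Z_n\sim\mathcal{N}(0,I)$ gives $\mathbb{E}\|Z_n\|_1 = \mathbb{E}\|X\|_1 = \sqrt{2/\pi}$ as well. Adding the two contributions yields the factor $2L\sqrt{2/\pi}$, which is exactly the claimed bound and explains where the constant $\sqrt{2/\pi}$ originates.

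The main obstacle — and the step I would be most careful about — is the second term $\|Z_n\|_1$, which is a random variable whose realized value is \emph{not} deterministically dominated by its mean $\sqrt{2/\pi}$; only its expectation is. I would therefore state the conclusion as a bound on the expected (or typical, almost-sure-limit) difference, reading the left-hand side as $\mathbb{E}\big|\frac1n\sum_i g(x_i) - g(Z_n)\big|$, for which the triangle-inequality-plus-Jensen chain above is rigorous and in fact holds for every $n$ — the $n\to\infty$ being needed only to upgrade the first term to an almost-sure statement. A secondary point to reconcile is dimension: for $x_i\in\mathbb{R}^{m\times m}$ the same argument produces $2L\,\mathbb{E}\|X\|_1 = 2Lm^2\sqrt{2/\pi}$, so the stated constant $2L\sqrt{2/\pi}$ implicitly uses a per-coordinate (or normalized) $L_1$ norm, which I would make explicit. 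Once these conventions are fixed, no estimate beyond the elementary Gaussian moment $\mathbb{E}|N|=\sqrt{2/\pi}$ and the law of large numbers is required.
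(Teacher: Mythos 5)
Your proposal is correct under the expectation reading you adopt, but it takes a genuinely different route from the paper's proof. The paper never inserts a reference point $g(0)$: it pairs terms directly, bounding $\frac{1}{n}\sum_i \vert g(x_i)-g(Z_n)\vert \le \frac{L}{n}\sum_i \Vert x_i - Z_n\Vert$ with $Z_n=\frac{1}{\sqrt n}\sum_j x_j$, observes that each difference $x_i - Z_n$ rescaled by $\sqrt{2-2/\sqrt n}$ is exactly standard Gaussian, and then uses the folded-normal mean $\sqrt{2/\pi}$ to compute the expectation of this bound, claiming in addition that its variance vanishes as $n\to\infty$ so that the random bound concentrates at $2L\sqrt{2/\pi}$. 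Your triangle inequality through $g(0)$ is lossier in principle: the pairing argument, correctly scaled, gives the tighter expected bound $2L/\sqrt{\pi}$ (the paper reaches $2L\sqrt{2/\pi}$ only via an arithmetic slip, pulling out the factor $\bigl(2-2/\sqrt n\bigr)$ rather than its square root $\sqrt{2-2/\sqrt n}$). What your route buys is rigor precisely where the paper is weakest: the paper's variance computation sums the variances of the terms $\Vert x_i - Z_n\Vert$ as if they were uncorrelated, yet they all share $Z_n$ and their pairwise correlation tends to $1/2$, so the claimed concentration fails --- consistent with your observation that $L\Vert Z_n\Vert_1$ is a nondegenerate folded normal for \emph{every} $n$. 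Your explicit retreat to a bound on the expectation is thus not a cosmetic caveat but the honest repair of the statement, and it holds for all finite $n$ rather than only asymptotically. Your dimension remark applies to the paper as well: its proof tacitly treats each $\Vert x_i - Z_n\Vert$ as a single scalar folded-normal variable rather than a sum of $m^2$ coordinate contributions, so both arguments implicitly use the per-coordinate convention you make explicit. In short, you share the two key Gaussian facts (exact standard normality of the rescaled sum and $\mathbb{E}\vert N\vert=\sqrt{2/\pi}$) but use a different decomposition, trading a constant that happens to match the stated one for a claim that is actually provable.
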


\begin{proof}
The Lipschitz continuity condition for the 1-norm can be expressed as follows: For any $x, y \in \mathbb{R}^{m \times m}$, there exists a constant $L > 0$ such that $\left\|g(x) - g(y)\right\|| \leq L\left\|x - y\right\|$.
By taking the difference of the two equations in Eq.(\ref{15}), we have:
\begin{equation}
\label{21}
\begin{aligned}
&\lim_{{n \to \infty}} \left\| \frac{1}{n} \sum_{{i=1}}^{n} g(x_i) - g\left(\frac{\sqrt{n}}{n} \sum_{{i=1}}^{n} x_i\right) \right\| \\
&= \lim_{{n \to \infty}} \left\|\frac{1}{n} \sum_{{i=1}}^{n} (g(x_i) - g\left(\frac{\sqrt{n}}{n} \sum_{{i=1}}^{n} x_i\right)) \right\| \\
&\leq \lim_{{n \to \infty}} \frac{1}{n} \sum_{{i=1}}^{n} \left\| g(x_i) - g\left(\frac{\sqrt{n}}{n} \sum_{{i=1}}^{n} x_i\right) \right\| \\
&\leq \lim_{{n \to \infty}} \frac{L}{n} \sum_{{i=1}}^{n} \left\| x_i - \frac{\sqrt{n}}{n} \sum_{{i=1}}^{n} x_i \right\| \\
&= \lim_{{n \to \infty}} \frac{L}{n} \left(2 - \frac{2}{\sqrt{n}}\right) \sum_{{i=1}}^{n} \left\| \frac{x_i - \frac{\sqrt{n}}{n} \sum_{{i=1}}^{n} x_i}{\sqrt{2 - \frac{2}{\sqrt{n}}}} \right\|. 
\end{aligned}
\end{equation}

Notice that for any $i$ from 1 to $n$ ,
\begin{equation}
\label{eq:22}
\left(\frac{x_i - \frac{\sqrt{n}}{n} \sum_{{i=1}}^{n} x_i}{\sqrt{2 - \frac{2}{\sqrt{n}}}}\right) \sim N(0, I).
\end{equation}

So we have:
\begin{equation}
\label{eq:22}
\left|\frac{x_i - \frac{\sqrt{n}}{n} \sum_{{i=1}}^{n} x_i}{\sqrt{2 - \frac{2}{\sqrt{n}}}}\right| \sim F(0, I).
\end{equation}

The folded normal distribution, denoted by $F$, has a probability density function (PDF) given by

\begin{equation}
f_Y(x; \mu, \sigma^2) = \frac{1}{\sqrt{2\pi\sigma^2}} \left( e^{-\frac{(x-\mu)^2}{2\sigma^2}} + e^{-\frac{(x+\mu)^2}{2\sigma^2}} \right),
\end{equation}

for $x \geq 0$, and 0 elsewhere. The PDF can be simplified as

\begin{equation}
f(x) = \sqrt{\frac{2}{\pi\sigma^2}} e^{-\frac{(x^2 + \mu^2)}{2\sigma^2}} \cosh\left(\frac{\mu x}{\sigma^2}\right),
\end{equation}

where $\cosh$ is the hyperbolic cosine function. The cumulative distribution function (CDF) is given by

\begin{equation}
F_Y(x; \mu, \sigma^2) = \frac{1}{2} \left[ \text{erf}\left(\frac{x+\mu}{\sqrt{2\sigma^2}}\right) + \text{erf}\left(\frac{x-\mu}{\sqrt{2\sigma^2}}\right) \right],
\end{equation}

for $x \geq 0$, where $\text{erf}()$ is the error function. The expression simplifies to the CDF of the half-normal distribution when $\mu = 0$.

The mean of the folded distribution is given by

\begin{equation}
\mu_Y = \sigma \sqrt{\frac{2}{\pi}} \left(1 - e^{-\frac{\mu^2}{2\sigma^2}}\right) + \mu \left[1 - 2\Phi\left(-\frac{\mu}{\sigma}\right)\right],
\end{equation}

where $\Phi(x)$ is the standard normal cumulative distribution function.

The variance can be easily expressed in terms of the mean:

\begin{equation}
\sigma_Y^2 = \mu^2 + \sigma^2 - \mu_Y^2
\end{equation}

Denote the final expression in Eq. (\ref{21}) as $Y$, we can obtain the following results as $n$ approaches infinity:

\begin{equation}
\label{22}
\begin{aligned}
E(Y) &= \lim_{{n \to \infty}} \frac{L}{n} \left(2 - \frac{2}{\sqrt{n}}\right) \cdot (n-1) \cdot \sqrt{\frac{2}{\pi}}= 2L \cdot \sqrt{\frac{2}{\pi}}, \\
\text{Var}(Y) &= \lim_{{n \to \infty}} \left(\frac{L}{n} \left(2 - \frac{2}{\sqrt{n}}\right)\right)^2 \cdot (n-1) \cdot \left(1-\frac{2}{\pi}\right)= 0.
\end{aligned}
\end{equation}

This means that the upper bound for our estimation error is $2L \cdot \sqrt{\frac{2}{\pi}}$.
\end{proof}

\subsubsection{Single sample for $\epsilon$}
\label{appendix:A2}
Based on the theorem's conclusion, we have the flexibility to choose between weighting before the forward pass or weighting after the forward pass during Gaussian sampling, while ensuring that their errors are within a certain upper bound. By introducing the scaling factor $\sqrt{n}$, we are able to preserve the distribution characteristics of the input samples, which is crucial for subsequent diffusion processes.

An easy corollary of Theorem \ref{thm:pdm} further states:
\begin{corollary}
\label{coro}
Let $L(\epsilon)$ denotes Eq.(\ref{eq:im_gen}) for convenience, given $t, x_0, c, \theta$.
As $n \to \infty$, we have 
\begin{equation} 
\label{16}
\sum_{i=1}^{n} \frac{1}{n} L(\epsilon_i) - L\left(\sum_{i=1}^{n} \frac{\sqrt{n}}{n} \epsilon_i\right) < 2K \cdot \sqrt{\frac{2}{\pi}}, 
\end{equation} 
where $\epsilon_{i} \overset{\text{i.i.d.}}{\sim} N(0, I)$ for $i=1:n$, $K$ is finite.
\end{corollary}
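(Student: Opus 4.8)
The plan is to obtain the corollary as a direct specialization of Theorem~\ref{thm:pdm}, taking the abstract Lipschitz function $g$ to be the conditional loss viewed as a function of its noise argument alone. Concretely, I would freeze $t$, $x_0$, $c$, and $\theta$, and define $g(\epsilon) := L(x_0,\epsilon,t,c;\theta) = \|\epsilon - \epsilon_\theta(x_t,t,c)\|_2^2$, where $x_t = \sqrt{\bar\alpha_t}\,x_0 + \sqrt{1-\bar\alpha_t}\,\epsilon$ as in Eq.~\eqref{xt}. With this identification the left-hand side of Eq.~\eqref{16} is exactly the quantity bounded in Eq.~\eqref{15}, so once $g$ is shown to be Lipschitz with a finite constant $K$, the inequality follows verbatim with the Lipschitz constant in Theorem~\ref{thm:pdm} replaced by $K$.

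First I would record that $\epsilon \mapsto x_t$ is affine with Jacobian $\sqrt{1-\bar\alpha_t}\,I$, hence Lipschitz with constant $\sqrt{1-\bar\alpha_t} \le 1$. Second, I would treat $\epsilon_\theta(\cdot,t,c)$ as a Lipschitz map in its first argument, which holds because a trained network is a finite composition of Lipschitz layers (linear maps, convolutions, and standard nonlinearities such as ReLU or GELU); thus the composite $\epsilon \mapsto \epsilon_\theta(x_t,t,c)$ is Lipschitz, and so is the residual map $r(\epsilon) := \epsilon - \epsilon_\theta(x_t,t,c)$, with $g = \|r(\cdot)\|_2^2$.

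The main obstacle is that the squared norm $\|\cdot\|_2^2$ is not globally Lipschitz, so some care is needed to claim $K < \infty$. I would resolve this by restricting attention to the high-probability region in which the standard Gaussian samples $\epsilon_i$ concentrate: on any fixed ball $B_R = \{\,\epsilon : \|\epsilon\| \le R\,\}$ the residual $r(\epsilon)$ has bounded norm, and on a bounded set a locally Lipschitz function has a finite Lipschitz constant. Since $\epsilon_i \sim N(0,I)$ places all but exponentially small mass inside such a ball, the effective Lipschitz constant of $g$ over the support that drives Eq.~\eqref{16} is finite; denoting it by $K$ and substituting into Theorem~\ref{thm:pdm} yields the bound $2K\sqrt{2/\pi}$, completing the argument. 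The only genuinely technical point, namely controlling the quadratic's growth, is thus absorbed into the finiteness of $K$, exactly as the corollary's hypothesis ``$K$ is finite'' anticipates.
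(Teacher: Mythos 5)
Your proposal follows essentially the same route as the paper: the paper also obtains Corollary~\ref{coro} by directly specializing Theorem~\ref{thm:pdm} with $g(\epsilon) = L(x_0,\epsilon,t,c;\theta)$, and it disposes of the Lipschitz hypothesis even more briefly than you do, simply asserting that the loss must be differentiable with bounded gradients to enable backpropagation (and remarking that empirically $K \le 1$). Your ball-truncation argument for $K < \infty$ is a more honest attempt at the one step the paper elides, though be aware it inherits the same looseness: for fixed radius $R$ the event that all $n$ samples lie in $B_R$ has probability tending to zero as $n \to \infty$, so a fully rigorous version would need $R$ (and hence the local Lipschitz constant of the squared norm) to grow with $n$ --- precisely the gap both your write-up and the paper leave open.
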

To enable backpropagation, the model loss $L(\epsilon)$ must be differentiable with bounded gradients, satisfying the Lipschitz continuity condition with respect to $\epsilon$.
In Corollary \ref{coro}, we observe that $\sum_{i=1}^{n} \frac{\sqrt{n}}{n} \epsilon_i \sim N(0, I)$. Instead of computing the entire average, we can directly sample $\epsilon$ from $N(0, I)$ for the forward process. 
This allows estimating Eq.(\ref{14}) practically with a single sample $\epsilon$,
while ensuring that the estimation error is within a certain range as introduced in Eq.(\ref{16}).

Please note that this is a very coarse upper bound estimation. In practice, we find that the actual difference is far less than this upper bound. This may be because we use a pretrained large model which has already been well-trained and generalized well to lots of inputs, making the loss typically very close to and small (always within 1 ($K$)). Therefore, we corroborate with experimental results and find that it unnecessary to resample $\epsilon$ repeatedly in the process of optimizing $c_x$.

Moreover, merely comparing the L values is insufficient. 
We must show that when L values are close, the difference between the corresponding c values is sufficiently small. When the loss difference $L(c_x)-L(c_y)$ is constrained within a certain range, the text embedding difference $c_x - c_y$ will also typically be limited, due to the following reasons:
\begin{itemize}
\item 
Pretrained language generation models are extremely sensitive to even minor changes in input sequences \cite{yin2021sensitivity}, such as the prompt. These subtle alterations can result in significant variations in the model's predicted outcomes and loss function. Therefore, when the difference in loss is constrained within a certain range, we can infer that the semantic disparity in the prompts is also effectively controlled.
\end{itemize}
Therefore, restricting the loss difference helps bound the text embedding difference, validating that the proposed c values indeed represent meaningful alternative text options rather than arbitrary or randomly perturbed sequences. This ensures the method can generate semantically-meaningful alternatives as expected.

\subsection{Estimation of $H^{-1}$}
\label{dforh}
\subsubsection{Derivation of $H$}
\label{ap:dh}
Directly computing the $H$ matrix requires quadratic complexity, and calculating its inverse further requires cubic complexity. Even with an A800 device with 80GB of memory, it is insufficient to support such a computational workload.

Recall that we are interested in estimating the inverse of the Hessian matrix $H = -\nabla\nabla_c \log g(c)|_{c_x}$. In general, the inverse of a matrix is expensive to compute, especially for large matrices. Therefore, we seek to simplify the computation of $H^{-1}$ by leveraging low-order information.

We begin by considering the second-order Taylor expansion of $g(c)$ around $c_x$ (flattened 59,136-dimension vector) :

\begin{equation}
\label{A3.1}
g(c) \approx g(c_x) + \nabla g(c_x)^T(c - c_x) + \frac{1}{2}(c - c_x)^T H (c - c_x)
\end{equation}

Since $c_x$ is the maximizer of $g(c)$, we have $\nabla g(c_x) = 0$. Therefore, Eq. \ref{A3.1} simplifies to:
\begin{equation}
\label{A3.2}
g(c) \approx g(c_x) - \frac{1}{2}(c - c_x)^T H (c - c_x)
\end{equation}
We can obtain the estimation formula of $H$ with respect to $L(c)$ since $-L(c)$ is used to compute $g(c)$:
\begin{equation}
\label{26}
H=\nabla\nabla L(c)\vert_{c_x}, 
\end{equation}

Therefore, we have:
\begin{equation}
\label{H}
(c_0 - c_x)^T H (c_0 - c_x) = 2(L(c_0)-L(c_x)),
\end{equation}

\subsubsection{Simplification for Estimation of $H^{-1}$}
\label{sforh}
Research \cite{raunak2019effective} has shown that word embeddings can represent word semantics through distributed, low-dimensional dense vectors. These embeddings capture linguistic patterns and regularities while requiring substantially less memory than sparse high-dimensional representations.

Based on the aforementioned research findings and previous approaches for variance estimation in Laplace modeling \cite{kirkpatrick2017overcoming}, we make the assumption that the Hessian matrix is a positive definite diagonal matrix. This assumption is reasonable since calculating the Hessian matrix in the high-dimensional text feature space poses significant computational challenges for stable diffusion models. Furthermore, as the Hessian matrix represents the second-order derivative of the log-likelihood function, it must be positive definite at the maximizer $c_x$ to ensure the local convexity of the function. 

\begin{definition}
\label{simp}
Let $(c_0 - c_x)=[t_1, t_2, \ldots, t_n]^T$. Let $H=\text{diag}(h_1, h_2, \ldots, h_n),\frac{1}{l}>h_i>0,i=1,2,...,n$. Denote $L:=2(L(c_0)-L(c_x))$. Define $D:=\sqrt{\sum_{i=1}^n \frac{1}{h_i^2}}$.
\end{definition}

From Definition \ref{simp}, we have:
\begin{equation}
(c_0 - c_x)^T H (c_0 - c_x) = [t_1, t_2, \ldots, t_n]
\begin{bmatrix}
h_1 & 0 & \ldots & 0 \\
0 & h_2 & \ldots & 0 \\
\vdots & \vdots & \ddots & \vdots \\
0 & 0 & \ldots & h_n \\
\end{bmatrix}
\begin{bmatrix}
t_1 \\
t_2 \\
\vdots \\
t_n \\
\end{bmatrix}
= \sum_{i=1}^{n} t_i^2h_i
\end{equation}

This implies that Eq.(\ref{H}) can be written as:
\begin{equation}
\label{set}
\sum_{i=1}^{n} t_i^2h_i =  L
\end{equation}
Given Eq.(\ref{set}) as a n-variable linear equation, solving it alone is insufficient. Additional n-1 equations are required to obtain a complete solution. However, due to the large value of n=51,396, solving this system of equations would be highly time-consuming.

To solve this problem, let's consider \(\hat{H} = \sigma^2I\), where $\sigma^2$ is an unknown positive scalar and \(I\) is the identity matrix. Replacing $H$ with $\hat{H}$ in Eq.(\ref{simp}), we approximate $\sigma^2$ as follows:
\begin{equation}
\sigma^2 =  \frac{L}{\sum_{i=1}^{n} t_i^2 } = \frac{L}{||c_0  - c_x||_2^2} 
\end{equation}
This implies:
\begin{equation}
\label{hatH}
\hat{H}^{-1} = (\sigma^2I)^{-1} = \frac{||c_0  - c_x||_2^2}{L}I \\
\end{equation}

On the other hand, in Definition \ref{simp}, we assume that the diagonal elements of matrix $H$ are bounded. Since $H$ is a diagonal matrix, its eigenvalues (diagonal elements) are inherently bounded. In our experiments with a learning rate of 0.001 and a step count of 15, we have a bound of $10^{-6}$ for $t_i^2$ from Eq. (\ref{set}). This leads to an upper bound of $10^6L$ for $\sum_{i=1}^{n} h_i$, resulting in an average value of the $h_i$ sequence with an approximate upper bound of around $20L$. Based on our observations, we incorporate the following into our standard experimental configuration: $L<1$, suggesting a value around $20$ for $1/l$, and subsequently determine that $D<12.5$ (We will further explore the implications of different values of $D$ in Figure \ref{cos}.).

Since Eq.~(\ref{hatH}) is considered as the final estimation of $H^{-1}$, it is crucial to estimate the upper bound of the distance between $\hat{H}^{-1}$ and $H^{-1}$. As both matrices are high-dimensional and diagonal, we employ the cosine dissimilarity, a widely used metric for measuring the distance between high-dimensional vectors, to quantify the matrix distance. Specifically, we extract the diagonal elements of the matrices as vectors and compute their cosine dissimilarity, which is defined as:

\begin{equation}
\text{Cosine Dissimilarity}(\vec{x}, \vec{y}) = 1 - \frac{\vec{x} \cdot \vec{y}}{||\vec{x}|||||\vec{y}||} 
\end{equation}

The cosine dissimilarity ranges from 0 to 2, where 0 indicates that the vectors are identical, 1 implies that they are orthogonal, 2 indicates they are maximally dissimilar. We first calculate the cosine similarity:

\begin{equation}
\begin{aligned}
Similarity
& = cos(\hat{H}^{-1} , H^{-1}) \\
& = \frac{\hat{H}^{-1} \cdot H^{-1}}{\lVert\hat{H}^{-1}\rVert\lVert H^{-1}\rVert} \\
& = \frac{\sum_{i=1}^n \frac{1}{h_i\sigma^2}}{\frac{\sqrt{n}}{\sigma^2}\sqrt{\sum_{i=1}^n\frac{1}{h_i^2}}} \\
& = \frac{\frac{1}{n}\sum_{i=1}^n \frac{1}{h_i}}{ \sqrt{\frac{1}{n}\sum_{i=1}^n\frac{1}{h_i^2}}}\\
& \geq \frac{\sqrt{D^2-(n-1)l^2}+(n-1)l}{\sqrt{n}D}, \\
\end{aligned}
\end{equation}
where $l$ and $D$ are defined in Definition \ref{simp}. 

\begin{figure*}[ht]
\begin{center}
\centerline{\includegraphics[width=0.6\linewidth]{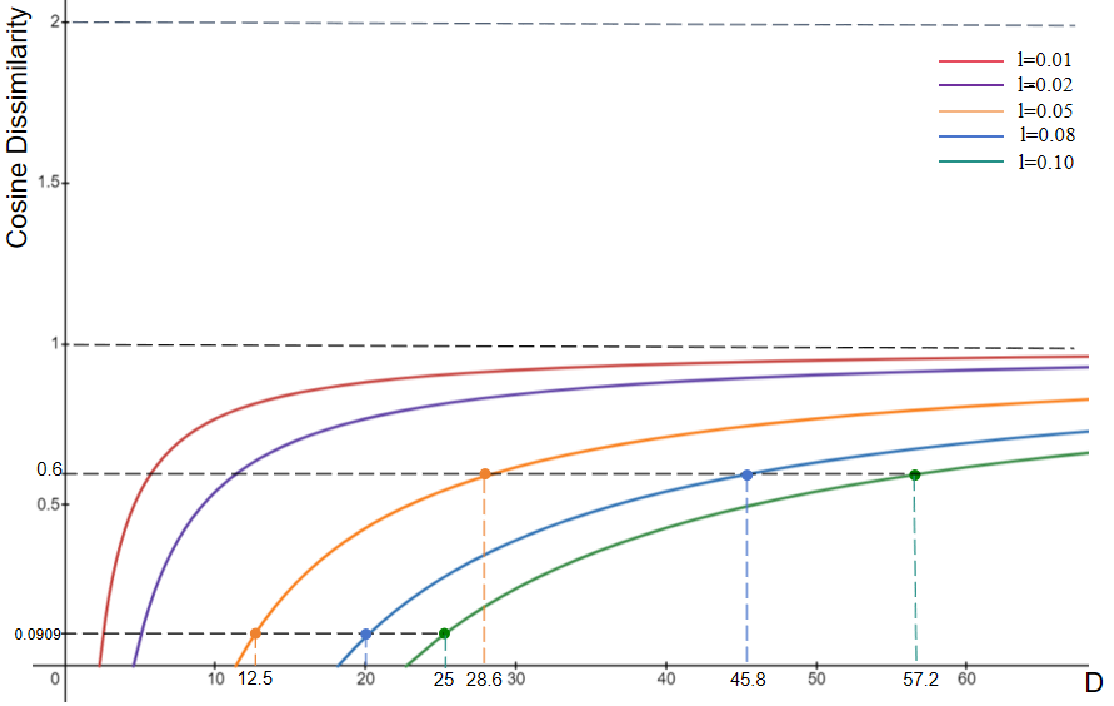}}
\caption{Cosine dissimilarity between $\hat{H}^{-1}$ and $H^{-1}$ under different settings of $D$ and $l$.
}
\label{cos}
\end{center}
\end{figure*}
We conduct ablation experiments with different $l$ and $D$ values, as shown in Figure \ref{cos}. Under the standard setting (the yellow point in the bottom left), the cosine dissimilarity between $\hat{H}^{-1}$ and $H^{-1}$ is upper bounded by 0.0909. Even when $D$ increases significantly (close to 60), the cosine dissimilarity does not exceed 0.6.

While this simplification may introduce some errors, it significantly reduces the complexity of computing the Hessian matrix, which is beneficial for model training.

\section{Further Discussion for PAP}
\label{ppgd}
We propose to incorporate prompt sampling and optimization into the PGD framework, analogous to AdvDM. Inspired by CW attacks \cite{carlini2017towards}, which maps adversarial examples to the tanh space, we can relax the constraints on the optimization problem compared to standard PGD.

Specifically, instead of clipping the perturbations to the $[x_0 - \eta, x_0 + \eta]$ box as in PGD, we can allow the perturbation $\delta$ to vary in the entire space by modifying the image as:

\begin{equation}
x^{\text{adv}} = \frac{1}{2} (\text{tanh}(\delta) + 1)
\end{equation}

This has two key advantages:
1.The perturbations $\delta$ are no longer bounded by $\eta$, allowing for more flexible optimization;
2.We can leverage the derivative properties of tanh: 
\begin{equation}
\frac{d}{dx} \text{tanh}(x) = 1 - \text{tanh}^2(x)
\end{equation}
to perform gradient computations without additional cost.

Overall, modifying the space in this way could lead to a smoother optimization process and is a promising direction for future improvement. The full algorithm would involve iteratively optimizing both the prompt embedding and image perturbations via signed gradient steps.

\section{Generalized Prompt-agnostic Adversarial Perturbation}
\label{appendix:B}
PAP algorithm can incorporate other arbitrary optimizers, such as ASPL, FGSM.
The pseudo-code is in Algorithm \ref{ASGPA} as an improved algorithm AS-PAP.
It is worth noting that when we set the text sampling step to 0, our AS-PAP algorithm is essentially equivalent to Anti-DB \cite{van2023anti}.
\begin{algorithm}[h]
   \caption{Alternating Surrogate and Prompt-agnostic Adversarial Perturbation(AS-PAP)}
   \label{ASGPA}
\begin{algorithmic}
   \STATE \textbf{Input:} images $x_0$, reference prompt $c$, parameter $\theta$, epoch numbers $M$, $N$, $K$, $Max$, learning rates $\alpha$, $r$, $\beta$, $\gamma$, budget $\eta$, noise steps $T$, loss function $L(x, \epsilon, t, c;\theta)$.
   \STATE \textbf{Output:} Adversarial examples $x_M^K$
   \STATE Initialize $x^0_0 = x_0$
    \FOR{$j=0$ {\bfseries to} $N-1$}
    \STATE Sample $t_c \in U(0,T)$, $\epsilon \sim N(0,I)$
        \STATE Compute gradient $g_c = \nabla_{c_j} L(x_0, \epsilon_c, t_c, c_j;\theta)$
       \STATE Compute momentum $m_{j+1} = \beta m_j + (1-\beta) g_c$ \\ 
        \STATE Update $c_{j+1} = c_j - r \cdot m_j$
        \ENDFOR
   \FOR{$k=0$ {\bfseries to} $K$}
    \STATE Initialize $x^k_0 = x^k_M$, $c_0 = c$

    \FOR{$i=0$ {\bfseries to} $M-1$}
    \STATE Sample $\epsilon, \epsilon_c \sim N(0,I)$, $t \in U(0,T)$  
    \STATE Compute $c = c_N + \frac{|c_0 - c_N)|_2^2}{2 \cdot (L(x_i^k, \epsilon, t, c_0;\theta) - L(x_i^k, \epsilon, t, c_N;\theta))} \cdot \epsilon_c$
    \STATE Update $x_{i+1}^k = \text{clip}_{x_0,\eta}(x_i^k + \alpha \cdot \nabla_{x_i^k} L(x_i^k, \epsilon, t, c;\theta))$
    \ENDFOR 
    \FOR{$m=0$ {\bfseries to} $\text{Max-1}$}
    \STATE $\theta \leftarrow \theta - \gamma \nabla_{\theta} L(x^k_M, \epsilon, t, c_0;\theta)$
     \ENDFOR

\ENDFOR 
   
\end{algorithmic}
\end{algorithm}

\section{Transfer-based Adversarial Attacks}
\label{landscape}

When crafting adversarial examples, the objective is to find perturbations that maximize the loss function, leading to misclassification or a decrease in the model's confidence. However, the loss landscape is often complex and non-convex, with numerous local optima and saddle points \cite{liu2016delving}. This landscape structure poses challenges for optimization algorithms, as gradient-based methods can easily get trapped in local optima, resulting in suboptimal or non-transferable adversarial examples.

In the context of our approach to estimate $c_x$, by treating different instances of the loss function $L_t$ as individual "classifiers" mentioned in above works and incorporating momentum in the optimization process, we aim to steer the convergence towards flatter regions in the timestep $t$ landscape. 
This strategy allows us to find a value of $c_x$ that minimizes the expected value of the integral in Eq. (\ref{14}) while promoting improved generalization across the sampled loss functions $L_t$.

Analyzing and leveraging the loss landscape in classification adversarial attack opens up new avenues for understanding and improving adversarial robustness. 
Exhaustive enumeration of all possible perturbations is often computationally expensive and time-consuming. On the other hand, solely fitting the optimization process on a few sampled time steps can lead to overfitting and lack of generalization.
It provides insights into the optimization process and offers opportunities to develop more effective and transferable adversarial attacks and defenses.

\section{Implementation Details}
\label{detail}
\subsection{Additional Details}
\textbf{Artists Name:}
vangogh, john-miler, alfred-sisley, pablo-picasso, abraham

\textbf{Test prompt contents:}
The specific prompts we use for test on Celeb-HQ, VGGFace2 and Wikiart are shown in Table \ref{prompt content}: 

\begin{table}[h]
\centering
\begin{tabular}{|c|p{6cm}|p{6cm}}
\hline
 & Celeb-HQ and VGGFace2 & Wikiart \\
\hline
p0 & a photo of sks person. & a sks painting. \\
\hline
p1 & a dslr portrait of sks person. & a sks painting of a tree. \\
\hline
p2 & an impressionistic depiction of sks person. & a photo of a dog in the style of sks painting. \\
\hline
p3 & an abstract representation of sks person. & a photo of a person in the style of sks painting. \\
\hline
p4 & a cyberpunk style photo of sks person. & a sks painting of a starry night. \\
\hline
p5 & a realistic painting of sks person. & a photo of a lion in the style of sks painting. \\
\hline
p6 & a concept art of sks person. & a photo of a sunflower in the style of sks painting. \\
\hline
p7 & a headshot photo of sks person. & a photo of a modern building in the style of sks painting. \\
\hline
p8 & a caricature sketch of sks person. & a photo of a robot machine in the style of sks painting. \\
\hline
p9 & a digital portrait of sks person. & a photo of the Mona Lisa in the style of sks painting. \\
\hline
\end{tabular}
\caption{Test prompts}
\label{prompt content}
\end{table}

\subsection{LoRA}
LoRA is a method that utilizes low-rank weight updates to improve memory efficiency by decomposing the weight matrix of a pre-trained model into the product of two low-rank matrices, \emph{i.e.}, \(W = AB\). 
This low-rank weight decomposition reduces parameters, minimizing storage needs. While simple, LoRA enhances memory efficiency crucial for large models on constrained hardware. However, defenses alone are limited versus robust methods. LoRA primarily optimizes efficiency over security, so is used to validate stronger defenses against complex attacks. Combining LoRA with such defenses yields balanced, resilient machine learning.

\subsection{Textual Inversion}
Textual Inversion allows users to personalize text-to-image generation models with their own unique concepts, without re-training or fine-tuning the model. 

The key steps are:
\begin{enumerate}
\item Represent a new concept with a pseudo-word $S^*$.
\item Find the embedding vector $v^*$ for this pseudo-word $S^*$ by optimizing the following objective using a small set of images depicting the concept:
\begin{equation}
v^* = \arg \min_v \mathbb{E}_{z \sim E(x), y, \epsilon \sim \mathcal{N}(0,1), t} \left[\| \epsilon - \epsilon_\theta(z_t, t, c_\theta(y)) \|_2^2\right],
\end{equation}
where $E$ is the encoder of a pre-trained Latent Diffusion Model (LDM), $x$ are the input images, $y$ are prompts of the form "A photo of $S^*$", $\epsilon_\theta$ is the denoising network, $c_\theta$ is the text encoder, $z_t$ is the noised latent code, and $t$ is the timestep.
\item Use the learned pseudo-word $S^*$ (represented by $v^*$) in natural language prompts to generate customized images with the text-to-image model, \emph{e.g.}, "A painting of $S^*$".
\end{enumerate}

\subsection{Metrics}
\label{metrics}
\textbf{CLIP} (Contrastive Language-Image Pretraining) is a framework that not only enables cross-modal understanding between images and text but also allows direct comparison between two images. The CLIP metric measures the similarity between two images using their embeddings.

The CLIP similarity between two images \(I_1\) and \(I_2\) can be calculated using the following formula:

\begin{equation}
    \text{CLIP}(I_1, I_2) = \frac{{\text{CosSim}(f(I_1), f(I_2)) + 1}}{2}
\end{equation}

Here, the terms have the following meanings:

$I_1$: the first image,
$I_2$: the second image,
$f(I_1)$: the embedding of the first image,
$f(I_2)$: the embedding of the second image,
$\text{CosSim}(x, y)$: the cosine similarity between vectors $x$ and $y$.

The formula calculates the cosine similarity between the embeddings of the two images. The resulting similarity score is normalized to the range [0, 1] by adding 1 and dividing by 2. A higher CLIP value indicates a stronger similarity between the two images.
CLIP models are trained on large-scale datasets to learn a joint embedding space for images and text. This enables the models to capture similarities and differences between images using their embeddings. By comparing the embeddings of two images using the cosine similarity, CLIP provides a measure of their visual similarity.
The CLIP metric can be used in various tasks such as image retrieval, image similarity search, and image clustering. It allows for effective comparison and organization of images based on their visual content, without the need for explicit labels or annotations.

\textbf{LPIPS} (Learned Perceptual Image Patch Similarity) is a metric used to measure the perceptual similarity between two images. It takes into account the local image patches instead of global image features, making it more aligned with human perception.
The LPIPS metric is calculated using the following formula:
\begin{equation}
    d(x, x_0) = \frac{1}{{H_l \cdot W_l}} \sum_{h=1}^{H_l} \sum_{w=1}^{W_l} \left\| w_l \cdot \left(\hat{y}_{lhw} - \hat{y}_{l0hw} \right) \right\|_2^2
\end{equation}
Here, the terms have the following meanings:

$x$: the input to the model, $x_0$: the reference input, $l$: the layer index, $H_l$: the height of the feature map at layer $l$, $W_l$: the width of the feature map at layer $l$, $\hat{y}{lhw}$: the predicted output of the model at position $(h, w)$ in layer $l$, $\hat{y}{l0hw}$: the predicted output of the model at position $(h, w)$ in layer $l$ for the reference input $x_0$, $w_l$: the weight associated with the position $(h, w)$ in layer $l$
The formula calculates the squared Euclidean distance between the weighted differences of predicted outputs at each position \((h, w)\) in layer \(l\), normalized by the total number of positions \((H_l \cdot W_l)\). This quantifies the perceptual similarity between the input \(x\) and the reference input \(x_0\) at the specified layer.
LPIPS provides a perceptually meaningful measure for comparing images, capturing both local and global information. It has been widely used in various computer vision tasks and can be especially useful for evaluating the performance of image synthesis models.

\textbf{BRISQUE} is a widely used no-reference image quality assessment metric that evaluates the quality of an image without relying on a reference image. It computes a quality score based on statistical features extracted from the image, such as brightness, contrast, and naturalness. The BRISQUE score ranges from 0 to 100, with higher scores indicating better image quality. The computation involves:
\begin{enumerate}
\item Extracting local normalized luminance statistics from the image.
\item Computing a feature vector from the statistics using a pre-trained model.
\item Mapping the feature vector to a quality score using a support vector regression model.
\end{enumerate}
BRISQUE is effective in capturing distortions introduced by various image processing operations and is widely used in benchmarking image generation models.

\textbf{LAION aesthetics predictor} is a linear model that takes CLIP image encodings as input. It was trained on a dataset of 17,600 images rated by humans. The images in the training set were scored on a scale from 1 to 10, with higher scores typically indicating artistic quality. In subsequent experiments, the LAION aesthetics predictor assigns scores to samples generated by diffusion models. Higher scores indicate a higher level of artistic quality in the images.

\subsection{Baselines}
To ensure a fair comparison, we followed the original settings of AdvDM, IadvDM, and Anti-Dreambooth in our experiments. We constrained the noise budget to be the same for all methods and focused on comparing their performance in untargted scenarios. This approach avoids the bias introduced by selecting specific target images, as each method may have its own optimized target image. Hence, we standardized the evaluation by considering untargted scenarios.
For the attack stage, we set the number of steps to 20. If a method involved training Dreambooth, we limited the training steps to 10. To expedite the training process, we used a batch size of 20 for training Dreambooth.

\section{Additional Study}
\label{ablation}

\subsection{More evidence of protecting fail of previous methods}
The Table \ref{previous} presents the results of previous protective perturbation methods (No defense, AdvDM, and Anti-DB) on the CelebA-HQ dataset, evaluated using the BRISQUE (higher is better) and CLIP (lower is better) metrics. The results are reported for ten different test prompts, where \textcolor[RGB]{0,128,0}{p0} is the same as the training prompt.

For the training prompt \textcolor[RGB]{0,128,0}{p0}, both AdvDM and Anti-DB outperform the "No defense" baseline, achieving higher BRISQUE scores (39.85 and 40.08 vs. 21.20) and lower CLIP scores (0.2053 and 0.1877 vs. 0.4821). This indicates that the protective perturbations are effective when the test prompt matches the training prompt.

However, for the other test prompts (p1 to p9), which differ from the training prompt, the performance of AdvDM and Anti-DB is not consistently better than the "No defense" baseline. For instance, considering the BRISQUE metric, AdvDM achieves similar scores to "No defense" for prompts p2 (29.45 vs. 27.65), p4 (29.78 vs. 28.45), and p7 (26.89 vs. 26.73). Anti-DB shows similar BRISQUE scores to "No defense" for prompts p3 (33.14 vs. 29.11), p5 (31.51 vs. 30.22), and p7 (31.74 vs. 26.73).

The CLIP metric also exhibits a similar trend, where AdvDM and Anti-DB do not consistently outperform "No defense" for prompts different from the training prompt. For example, AdvDM has similar CLIP scores to "No defense" for prompts p2 (0.3124 vs. 0.2543), p4 (0.3267 vs. 0.2987), p7 (0.2654 vs. 0.3015), p8 (0.3012 vs. 0.2976), p9 (0.3129 vs. 0.3211) indicating poor defense performance.

These observations suggest that while AdvDM and Anti-DB are effective in protecting against the training prompt, their performance deteriorates when the test prompt deviates from the training prompt. This highlights the challenge of developing robust protective perturbations that can generalize to unseen prompts, which is a crucial requirement in real-world scenarios.
\label{more}
\begin{table}[h]
\scriptsize
\centering \caption{Results of previous protective perturbations using ten different test promtps (Prompt contents in Table \ref{prompt content} of Appendix) on the CelebA-HQ dataset. \textcolor[RGB]{0,128,0}{p0} is the same as the training prompt.
}
 \resizebox{\linewidth}{!}{
 \begin{tabular}{l|l|cccccccccc}
 \toprule[0.7pt]
 Metrics & Method & \textcolor[RGB]{0,128,0}{p0} & p1 & p2 & p3& p4& p5& p6& p7& p8& p9\\
 \hline
 \multirow{3}{*}{BRISQUE ($\uparrow$)} 
 &No defense & \textbf{21.20} & 25.34 & 27.65 & 29.11 & 28.45 & 30.22 & 27.89 & 26.73 & 31.05 & 28.98 \\
 &AdvDM & \textbf{39.85} & 37.12 & 29.45 & 35.67 & 29.78 & 33.14 & 37.25 & 26.89 & 36.52 & 32.23 \\
 &Anti-DB &\textbf{40.08} & 38.96 & 36.65 & 33.14 & 34.98 & 31.51 & 32.82 & 31.74 & 37.66 & 35.98 \\
 \hline
 \multirow{3}{*}{CLIP ($\downarrow$)} 
 &No defense & \textbf{0.4821} & 0.4675 & 0.2543 & 0.4032 & 0.2987 & 0.3465 & 0.4211 & 0.2654 & 0.3012 & 0.3129 \\
 &AdvDM &\textbf{0.2053} & 0.2687 & 0.3124 & 0.2845 & 0.3267 & 0.2895 & 0.2532 & 0.3015 & 0.2976 & 0.3211 \\
 &Anti-DB & \textbf{0.1877} & 0.2612 & 0.3029 & 0.2754 & 0.3165 & 0.2806 & 0.2459 & 0.2924 & 0.2884 & 0.3115 \\
 \bottomrule[0.7pt]
 \end{tabular}}
 \label{previous}
\end{table}

\begin{figure}[t]
\begin{center}
\centerline{\includegraphics[width=\linewidth]{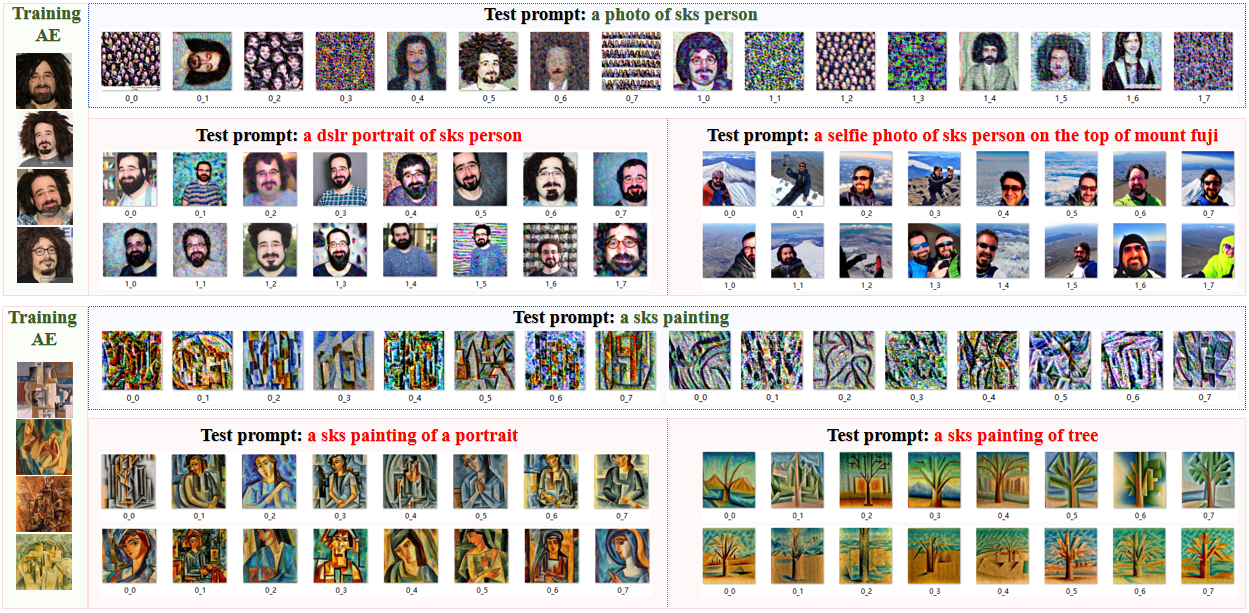}}
\caption{
Visualized results of different test prompts toward Anti-DB method on the CelebA-HQ dataset and Wikiart dataset, with training prompt: \textcolor[RGB]{0,128,0}{a photo of sks person} (top) / \textcolor[RGB]{0,128,0}{a sks painting} (bottom). The left column are adversarial examples (denoted as AE) by Anti-DB.
}
\vspace{-0.4cm}
\label{fig:method}
\end{center}
\end{figure}

Additionally, from Figure \ref{fig:method}, when the test prompt matches the training prompt (\emph{e.g.} \textcolor[RGB]{0,128,0}{green prompt}, the test images exhibit stable interference. 
However, when the test prompts differ, the test images maintain high quality and semantic consistency with test prompts, nearly resembling scenarios without any defense. This failure in previous protection is evident from the \textcolor[RGB]{250,0,0}{red prompt}-generated images in Figure \ref{fig:method} 

\subsection{Sampling steps for training DreamBooth}
The selection of the number of steps for fine-tuning is also a critical factor that affects the quality of the generated images. Therefore, we conducted ablation experiments with different sampling step values: 500, 1000, 1200, 1600, and 2000. The results in Table \ref{table:samplestep} demonstrate the stable defense effectiveness of our method across different sampling test step settings.
\begin{table}[h]
\scriptsize
\centering
 \caption{Ablation study of sampling test steps.}
 \begin{tabular}{c|cccccc}
 \toprule[0.7pt]
 Sampling steps & FID ($\uparrow$) & CLIP-I ($\downarrow$) & LPIPS ($\uparrow$) & LAION (↓) & BRISQUE (↑) & CLIP (↓)\\
 \hline
 500  & \textbf{452.0} & \textbf{0.5467} & 0.7363 & \textbf{5.293} & \textbf{38.91} & 0.2875\\
 1000 & 448.3 & 0.5641 & 0.7782          & 5.490 & 38.47 & 0.2654\\
 1200 & 437.9 & 0.5909 & \textbf{0.7889} & 5.419 & 38.12& 0.2387                           \\
 1600 & 446.8 & 0.5948 & 0.7683          & 5.587 &37.98&  0.2032       \\
 2000 & 433.2 & 0.5978 & 0.7759 &5.338&37.24&      \textbf{0.1921}         \\
 \bottomrule[0.7pt]
 \end{tabular}
 \label{table:samplestep}
\end{table}

\subsection{Sampling steps for $\epsilon$ sampling steps}
We also analyze the effect of $\epsilon$ sampling steps used to estimate $c_x$, which is approximated in our method. As shown in Table~\ref{table:ablationstep}, metrics such as FID, CLIP, and LPIPS peak at 10 $\epsilon$ steps, then stabilize and resemble the 0 $\epsilon$ step results from 15 steps onward. This validates our method for approximating $c_x$ in Eq.(\ref{17}). 

\begin{table}[h]
\scriptsize
\centering
 \caption{Ablation study of $\epsilon$ sampling steps on Wikiart.}
 \begin{tabular}{c|ccc}
 \toprule
 $\epsilon$ sampling steps & FID ($\uparrow$) & CLIP-I ($\downarrow$) & LPIPS ($\uparrow$)\\
 \hline
 0 & 448.3 & 0.5641 & 0.7782 \\
 5 & 446.8 & 0.5607 & 0.7743 \\
 10 & 482.2 & 0.5598 & 0.7801 \\
 15 & 454.5 & 0.5639 & 0.7788 \\
 20 & 453.9 & 0.5602 & 0.7775 \\
 \bottomrule
 \end{tabular}
 \label{table:ablationstep}
\end{table}

\subsection{Noise budget}
\label{ap:nb}
Table~\ref{table:budget} shows the impact of the noise budget $\eta$ on PAP's defense performance. 
The trade-off between noise stealthiness and defense performance is important to consider. A noise budget of $\eta=0.05$ is effective, but increasing the budget improves performance at the expense of stealth.

We provide a detailed analysis of the impact of noise budget on our experimental results, with additional visualizations available in Figure \ref{vis}. In general, a larger noise budget yields better defense performance. However, it also introduces more noticeable image distortions. In extreme cases, excessive perturbations can degrade the image to pure noise, defeating the purpose of our task. Hence, striking a balance between protection effectiveness and the magnitude of image perturbations becomes essential.

\begin{table}[t]
\scriptsize
\centering
 \caption{Ablation study of noise budget, ranging from 0.01 to 0.15.}
 \begin{tabular}{c|cccccc}
 \toprule[0.7pt]
$\eta$ & FID ($\uparrow$) & CLIP-I ($\downarrow$) & LPIPS ($\uparrow$)  & LAION (↓) & BRISQUE (↑) & CLIP (↓)\\
 \hline
  -    & 198.7 & 0.7715 & 0.6193 & 6.367 & 27.34 & 0.3515 \\
  0.01 & 262.7 & 0.7188 & 0.6522 & 5.994 & 31.12 & 0.3154\\
  0.03 & 334.8 & 0.6632 & 0.6840 & 5.603 & 35.57 & 0.2896\\
  0.05 & 448.3 & 0.5641 & 0.7782 & 5.490 & 38.47 & 0.2654  \\
  0.10 & 498.5 & 0.4914 & 0.8384 & 5.003 & 42.22 & 0.2088\\
  0.15 & \textbf{512.9} & \textbf{0.4208} & \textbf{0.8755} & \textbf{4.635} & \textbf{46.26} & \textbf{0.1716}\\
 \bottomrule[0.7pt]
 \end{tabular}
 \label{table:budget}
\end{table}

\subsection{Other Pseudo-word}
In our experiments, we conducted evaluations using the commonly used pseudo-word "sks," which is representative but may not cover all possible cases. To further validate our method, we included additional less commonly used pseudo-words. We selected three traditional metrics for measuring image similarity, and the results in Table \ref{t@t} clearly demonstrates that our method consistently outperforms the others.
\begin{table}[h]
\scriptsize
\centering
\caption{Comparison with other adversarial attack methods with pseudo-word "t@t"}
\begin{tabular}{l|l|ccc}
\toprule[0.7pt]
\multirow{1}{*}{Dataset} & \multirow{1}{*}{Method} & FID ($\uparrow$) & CLIP-I ($\downarrow$) & LPIPS ($\uparrow$) \\
\hline
\multirow{5}{*}{Celeb-HQ} & Clean & 142.3 & 0.7872 & 0.4725 \\
& AdvDM & 187.8 & 0.7023 & 0.5213 \\
& Anti-DB & 197.4 & 0.6961 & 0.5597 \\
& IAdvDM & 161.6 & 0.7488 & 0.4912 \\
& PAP (Ours) & \textbf{228.6} & \textbf{0.5749} & \textbf{0.6348} \\
\hline
\multirow{5}{*}{VGGFace2} & Clean & 239.7 & 0.6504 & 0.5529 \\
& AdvDM & 240.6 & 0.6432 & 0.5733 \\
& Anti-DB & 254.4 & 0.6307 & 0.6097 \\
& IAdvDM & 244.9 & 0.6336 & 0.5955 \\
& PAP (Ours) & \textbf{272.8} & \textbf{0.5301} & \textbf{0.6877} \\ \hline
\multirow{5}{*}{Wikiart} & Clean & 177.7 & 0.7339 & 0.5929 \\
& AdvDM & 313.7 & 0.6704 & 0.6558 \\
& Anti-DB & 349.2 & 0.6489 & 0.6910 \\
& IAdvDM & 302.8 & 0.6641 & 0.6590 \\
& PAP (Ours) & \textbf{383.0} & \textbf{0.6167} & \textbf{0.7168} \\
\bottomrule[0.7pt]
\hline
\end{tabular}
\label{t@t}
\vspace{-0.4cm}
\end{table}

\subsection{Robustness}
\label{jpeg}
Adversarial examples often lose their protective effect on images when subjected to image operations such as Gaussian blur, JPEG compression, etc. Therefore, we conduct robustness tests specifically targeting the JPEG compression and Gaussian Blur. 
We evaluate the quality of the generated images at different settings, as shown in Table \ref{R1}. 
Despite a slight decline in our results after applying these treatments, it is worth noting that the evaluation based on image quality metrics still demonstrates favorable outcomes. This suggests that our method maintains a good level of effectiveness even in the presence of JPEG compression and Gaussian Blur. 

\begin{table}[h]
\scriptsize
\centering
\caption{Metrics for images generated by PAP after JPEG compression or Gaussian Blur.}
\begin{tabular}{c|c|c}
\hline
 & LAION (↓) & BRISQUE (↑) \\
\hline
PAP & 5.490 & 38.47 \\
\hline
JPEG Comp. Q=10 & 5.732 & \textbf{45.33} \\
JPEG Comp. Q=30 & \textbf{5.485} & 34.29 \\
JPEG Comp. Q=50 & 5.561 & 33.63 \\
JPEG Comp. Q=70 & 5.593 & 39.34 \\
\hline
Gaussian Blur K=3 & 5.882 & 41.88 \\
Gaussian Blur K=5 & 5.796 & 42.08 \\
Gaussian Blur K=7 & 5.552 & 43.76 \\
Gaussian Blur K=9 & 5.991 & 41.20 \\
\hline
\end{tabular}
\label{R1}
\end{table}

\subsection{Targeted Attack}
Targeted attack on generative models aims to perturb an input image towards a specific target image, resulting in outputs that closely resemble the target. Compared to untargeted attack, targeted attack achieves more consistent and effective results. We conduct additional experiments. The results are presented in Table \ref{target}.
\begin{table}[h]
\scriptsize
\centering
 \caption{Ablation study of targeted attack.}
 \begin{tabular}{c|cccccc}
 \toprule[0.7pt]
 - & FID ($\uparrow$) & CLIP-I ($\downarrow$) & LPIPS ($\uparrow$) & LAION (↓) & BRISQUE (↑) & CLIP (↓)\\
 \hline
 untarget & 448.3 & 0.5641 & 0.7782          & 5.490 & 38.47 & 0.2654\\
  \hline
 target & \textbf{492.1} & \textbf{0.5335} & \textbf{0.7922} & \textbf{4.974} & \textbf{40.23}& \textbf{0.2108}   \\
 \bottomrule[0.7pt]
 \end{tabular}
 \label{target}
\end{table}

\subsection{The Performance of the Trained Prompts}
In Table \ref{tb:ptp}, our method slightly trails behind the SOTA by 0.01/39.57 in LPIPS/FID respectively. However, our method still maintains a leading position in ISM, FDFR, BRISQUE, and CLIP metrics.
\begin{table}[h]
    \scriptsize
    \centering
    \caption{Performance of the trained prompts.}
    \label{tb:ptp}
    \begin{tabular}{c|cccccc}
        \toprule[0.7pt]
        Method & LPIPS & FDFR & ISM & BRISQUE & FID & CLIP \\ 
        \hline
        AdvDM & 0.65 & 0.61 & 0.39 & 33.68 & 301.55 & 0.25 \\ 
        Anti-DB & 0.71 & 0.68 & 0.34 & 32.24 & 277.24 & 0.24 \\ 
        IAdvDM & 0.65 & 0.57 & 0.43 & 33.55 & 296.31 & 0.28 \\ 
        PAP & 0.70 & 0.68 & 0.33 & 36.43 & 261.98 & 0.24 \\ 
        No Defense & 0.50 & 0.01 & 0.55 & 23.22 & 128.31 & 0.38 \\ 
        \bottomrule[0.7pt]
    \end{tabular}
\end{table}

\subsection{Time Comparisons with Baselines}

In Table \ref{tab:time_comparisons}, we present the time required for each method. The results demonstrate that PAP introduces an average computation time of processing a set of images (\textless 300s). In the future, we aim to further optimize the algorithm to reduce the time to within 4 minutes.

\begin{table}[h]
    \scriptsize
    \centering
    \caption{Time comparisons with baselines.}
    \label{tab:time_comparisons}
    \begin{tabular}{c|cccc}
        \toprule[0.7pt]
        Method & AdvDM & Anti-DB & IAdvDM & PAP \\ 
        \hline
        Time & 262s & 288s & 204s & 297s \\ 
        \bottomrule[0.7pt]
    \end{tabular}
\end{table}




\subsection{Error Bars}
\label{ap:eb}

Due to the sampling from three different distributions involved in our algorithm, the results obtained in each experimental run may exhibit slight variations. To address this, we conduct multiple independent repetitions of the experiments and calculate the mean and standard deviation for each evaluated metric. The results are then presented with error bars, providing a more comprehensive assessment of the algorithm's performance.

\begin{figure}[h]
\centering
\includegraphics[width=0.6\linewidth]{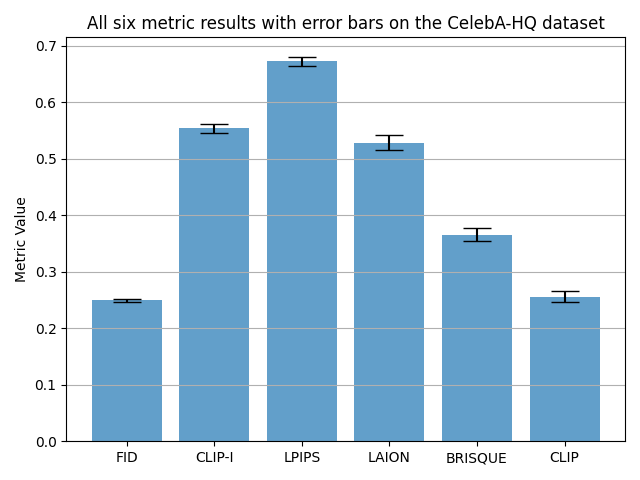}
\caption{All six metric results with error bars on the CelebA-HQ dataset. The error bars indicate the stability of the results across multiple runs. For the purpose of visual representation, the results of FID, BRISQUE, and LAION in the graph are obtained by multiplying the original data by 0.001, 0.01, and 0.1, respectively.}
\label{fig:error_bars_example}
\end{figure}

Specifically, for each metric to be evaluated, we perform $N=10$ independent experiment repetitions, obtaining $N$ result values ${x_1, x_2, \ldots, x_N}$. The mean $\mu$ and standard deviation $\sigma$ of these values are calculated as follows:

\begin{equation}
\mu = \frac{1}{N}\sum_{i=1}^{N}x_i
\end{equation}
\begin{equation}
\sigma = \sqrt{\frac{1}{N-1}\sum_{i=1}^{N}(x_i - \mu)^2}
\end{equation}

When plotting the results, we use the mean $\mu$ as the metric's result value and the values $\mu \pm \sigma$ as the upper and lower limits of the error bars, respectively.
By presenting the error bars, we aim to provide a more reliable and informative evaluation of our proposed method.

As shown in Figure \ref{fig:error_bars_example}, the error bars of  our PAP method demonstrate the stability and consistency of our method's performance across different runs.

\section{Impact Statements}
\label{ap:is}
Powerful customized diffusion models enable many applications but also raise privacy and intellectual property concerns \cite{amer2023ai,fan2023trustworthiness} if misused to reconstruct private images or replicate protected artistic works without consent \cite{wang2023security,katirai2023situating}. 
Previous work on privacy protection has been questioned in its effectiveness \cite{zhao2023can,qin2023destruction} and suffers from a lack of generalization to unknown prompts \cite{wu2023towards}. Our proposed Prompt-agnostic Adversarial Perturbation technique aims to address the limitations by considering the distribution of text prompts the attacker may use. Experiments demonstrate our approach more effectively prevents unauthorized use of sensitive private data and artistic styles compared to state-of-the-art baselines under unseen prompts. By enhancing resilience against unseen attacks targeting uncontrolled generation, our work can help balance AI's societal benefits with mitigating privacy and legal risks, and provides insights towards continued research on defense techniques that responsibly enable trustworthy applications involving generation of human data.

However, our model could also be used to maliciously contaminate data sources. If adversarial examples generated by our model are mixed with regular natural images, it could potentially contaminate the entire dataset, leading to its abandonment, as the adversarial perturbations we introduce may be difficult to detect.

\section{Limitations and Future Work}
\label{limits}
\subsection{Semantic Relevance in Prompt Sampling}
While modeling the prompt distribution as a continuous Gaussian distribution is a reasonable approximation, we must acknowledge that the sampled c values may not always reflect realistic semantics in the text feature space. For instance, simply adding noise to text embedding can result in incoherent sentences that lack real-world meaning. This poses a challenge to the effectiveness of our global protection when sampling from the prompt distribution. In reality, we aim to consider prompt samples that are both close to the mean and contain meaningful semantic information, resembling a discrete scenario.

To further explore the relationship between the estimated prompts and natural language, we have conducted a visual experiment in Figure \ref{re:R2}. By reducing the dimensionality of 10 test prompts' embeddings and $c_N$
 using PCA, we visualize the estimated prompt distribution and test prompts projected in a low-dimensional space for the tasks of "Facial Protection" (left) and "Preservation of Artistic Style" (right) respectively. The figures demonstrate that the two principal components of test prompts are discretely distributed within the modeled prompt distribution, indicating a flexible probability of being selected for adversarial attacks. This illustrates that our modeling effectively covers a range of natural language inputs in the semantic space.
 
\begin{figure}[H]
\begin{center}
\centerline{\includegraphics[width=\linewidth]{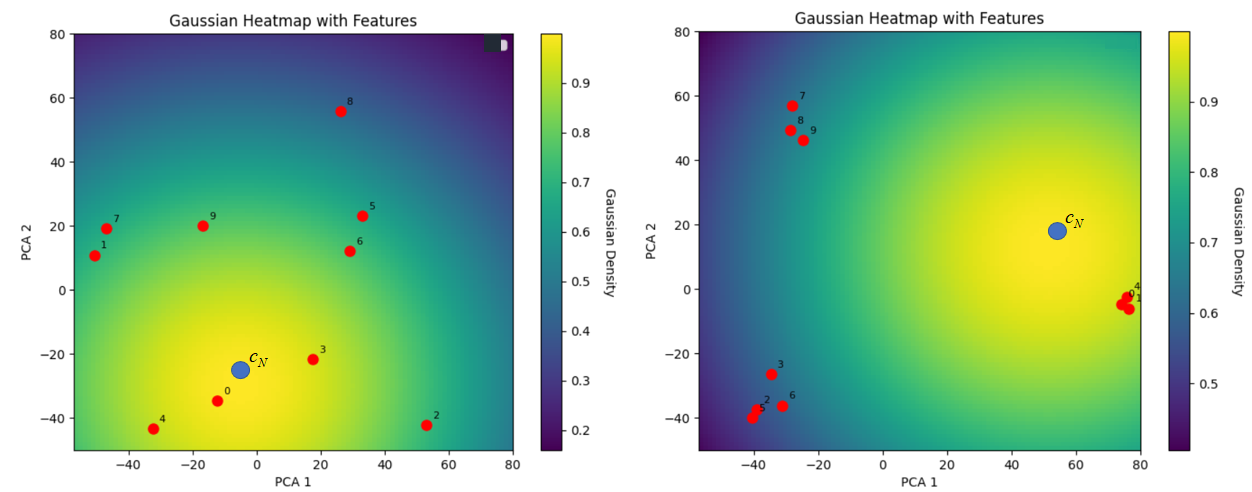}}
\vspace{-0.4cm}
\caption{Evaluation of generation performance of models across different stable diffusion versions on different metrics.
}
\vspace{-0.4cm}
\label{re:R2}
\end{center}
\end{figure}

In future work, one potential improvement is to discretize the modeled Gaussian distribution by introducing a restriction module, denoted as F. This module would reject prompt samples lacking semantic information, allowing only those with strong semantic relevance to enter the optimization process. This approach would provide a more accurate representation of meaningful alternative text options.

\subsection{$H^{-1}$ Estimation}
In estimating $H^{-1}$, we made simplifications that inevitably introduced errors. Although we theoretically demonstrated that the upper bound of the error caused by our simplification is sufficiently small, it is worth exploring if there are alternative estimation methods that can achieve even smaller and more accurate errors.

In future work, we may consider other estimation approaches for $H^{-1}$, such as matrix low-rank decomposition, to pursue a more rigorous approximation. By exploring these alternative methods, we aim to discover more precise and reasonable estimations for $H^{-1}$.

\clearpage

\section{Visualization}
\label{vis}

\begin{figure*}[ht]
\begin{center}
\centerline{\includegraphics[width=\linewidth]{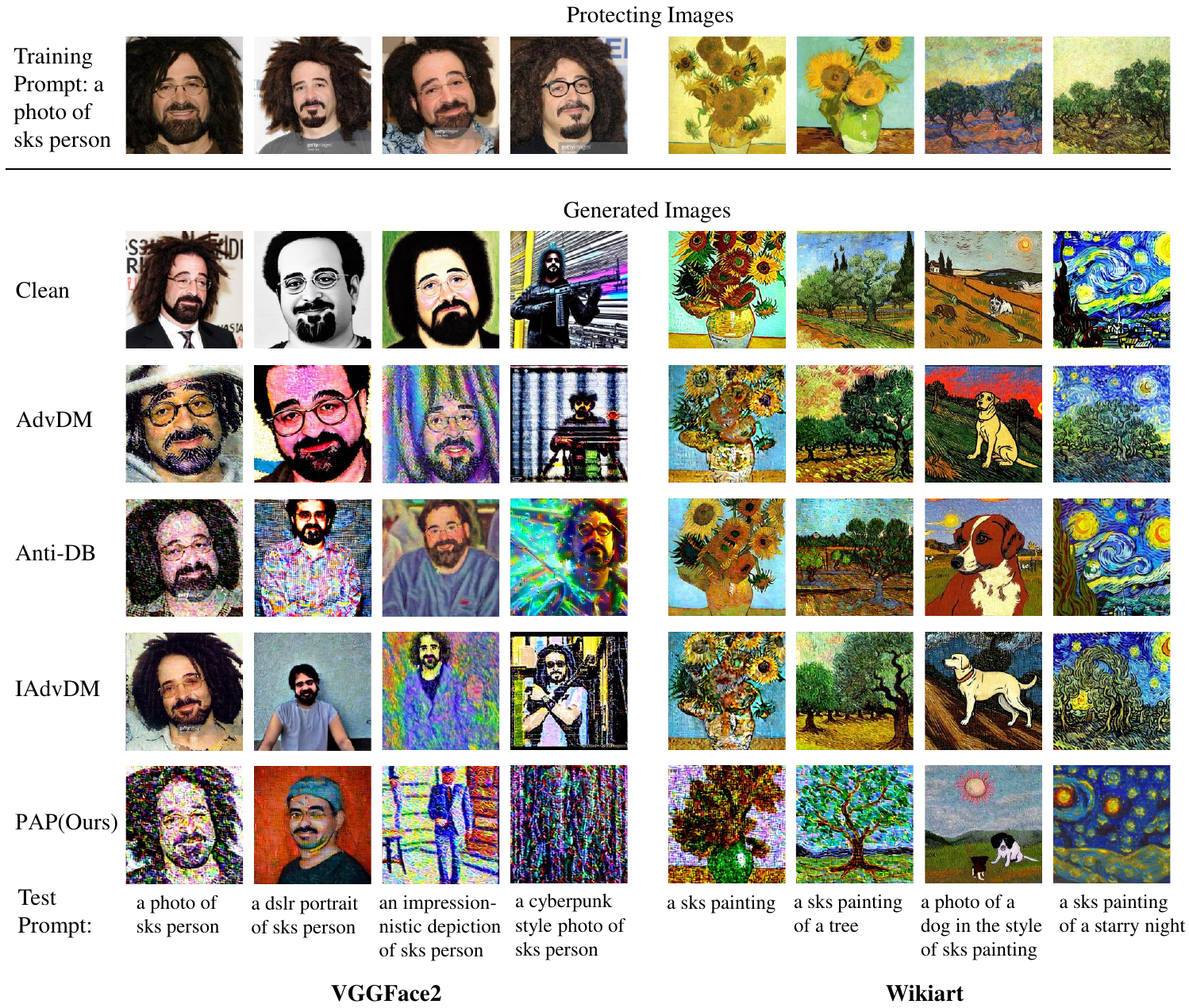}}
\vspace{-0.4cm}
\caption{Clean examples and generated images (qualitative defense results) of different methods in VGGFace2 (left) and Wikiart (right). Each row represents a method, and each column represents a different test prompt (shown at the bottom). The adversarial examples generated by our method effectively defend against all prompts in both datasets. In contrast, other comparison methods primarily focus on protecting the fixed prompt (the first column), resulting in compromised defense for other prompts.
}
\vspace{-12pt}
\label{fig5}
\end{center}
\end{figure*}


\begin{figure*}[ht]
\begin{center}
\centerline{\includegraphics[width=\linewidth]{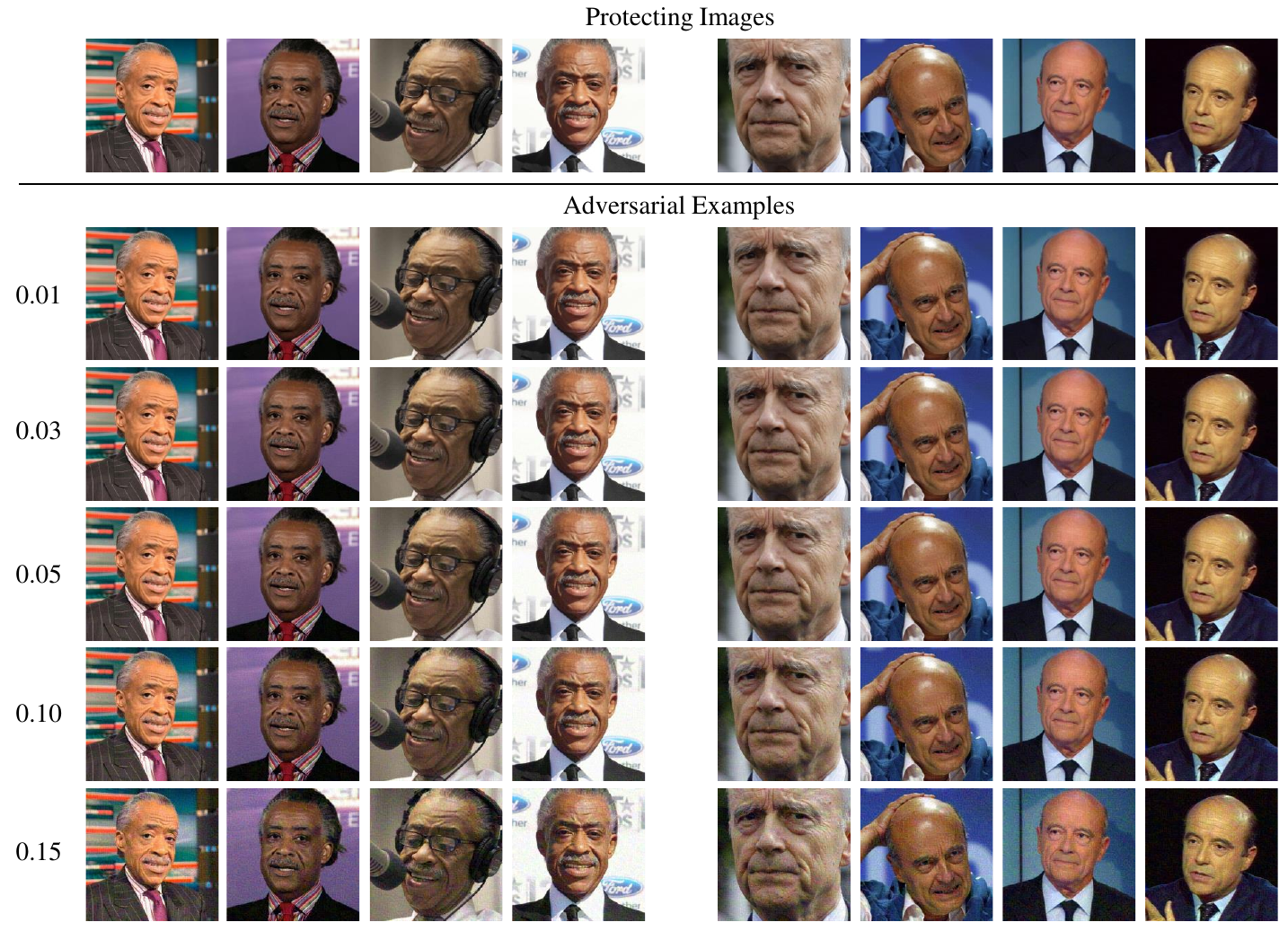}}
\vspace{-0.4cm}
\caption{Clean examples and corresponding adversarial examples generated by our method with different noise budgets (ranging from 0.01 to 0.15) on VGGFace2. The training prompt is ``a photo of sks person". 
}
\vspace{-12pt}
\label{6}
\end{center}
\end{figure*}


\begin{figure*}[ht]
\begin{center}
\centerline{\includegraphics[width=\linewidth]{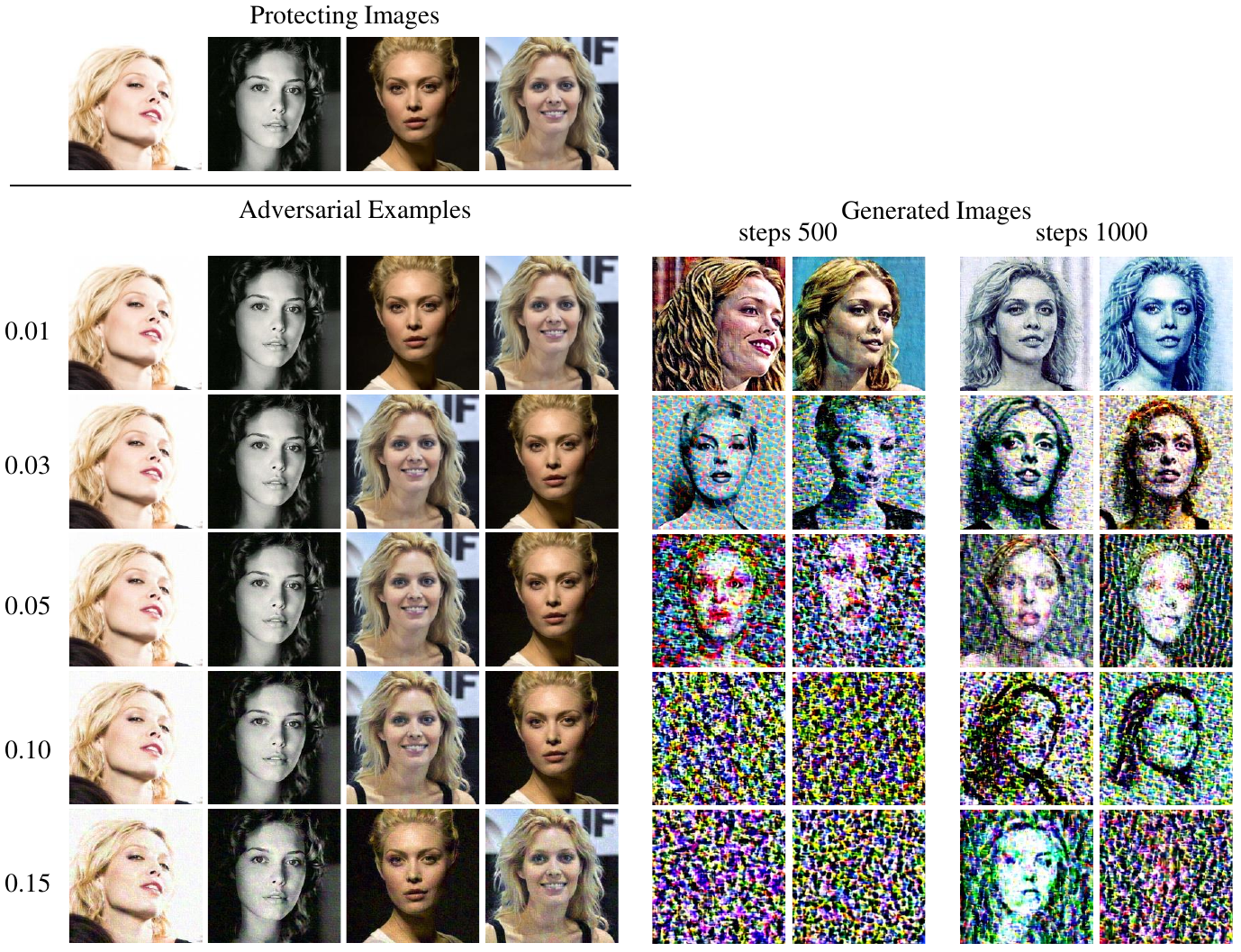}}
\vspace{-0.4cm}
\caption{Top: clean examples. Bottom left: adversarial examples produced by our method with different noise budgets (ranging from 0.01 to 0.15) on VGGFace2 with training prompt: ``a photo of sks person". Bottom right: corresponding generated images of stable diffusion with sampling steps 500 and sampling steps 1,000.
}
\vspace{-12pt}
\label{7}
\end{center}
\end{figure*}


\begin{figure*}[ht]
\begin{center}
\centerline{\includegraphics[width=\linewidth]{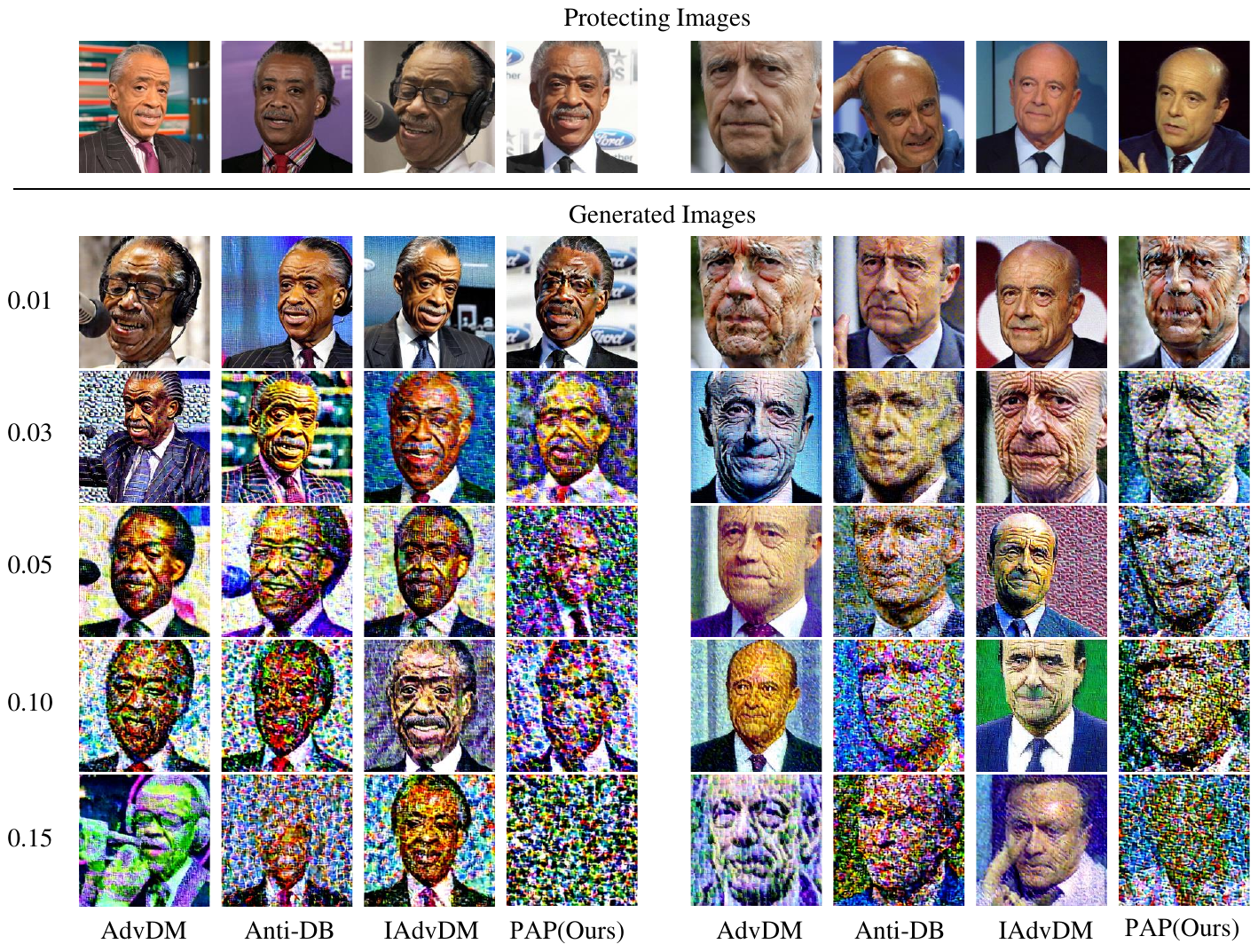}}
\vspace{-0.4cm}
\caption{Top: clean examples. Bottom: generated images of different methods with different noise budgets (ranging from 0.01 to 0.15) on VGGFace2. The training prompt is ``a photo of sks person". 
}
\vspace{-12pt}
\label{8}
\end{center}
\end{figure*}


\begin{figure*}[ht]
\begin{center}
\centerline{\includegraphics[width=\linewidth]{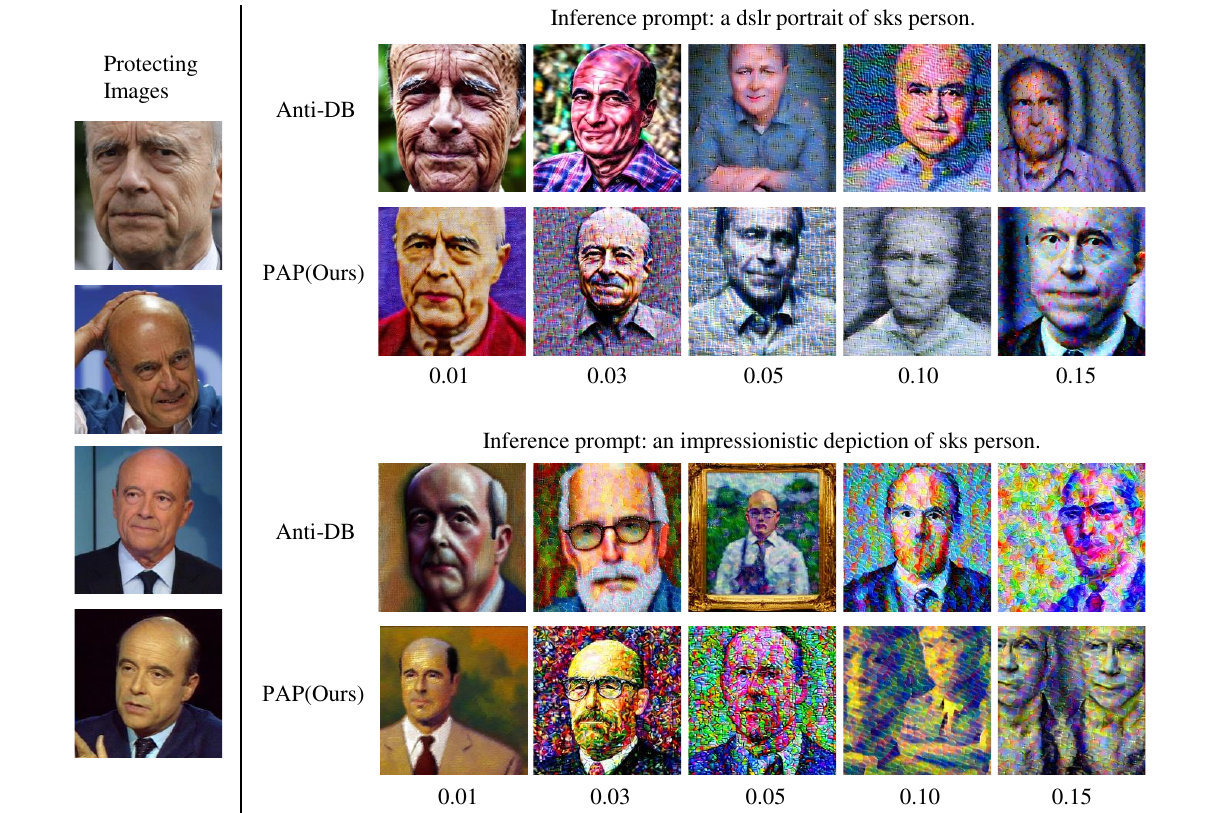}}
\vspace{-0.4cm}
\caption{Left: clean examples. Top right: generated images of Anti-DB and our method with different noise budgets (ranging from 0.01 to 0.15) on VGGFace2. The inference prompt is ``a dslr portrait of sks person". Bottom right: generated images of Anti-DB and our method with different noise budgets on VGGFace2. The inference prompt is ``an impressionistic depiction of sks person".
}
\vspace{-12pt}
\label{fig9}
\end{center}
\end{figure*}


\begin{figure*}[ht]
\begin{center}
\centerline{\includegraphics[width=\linewidth]{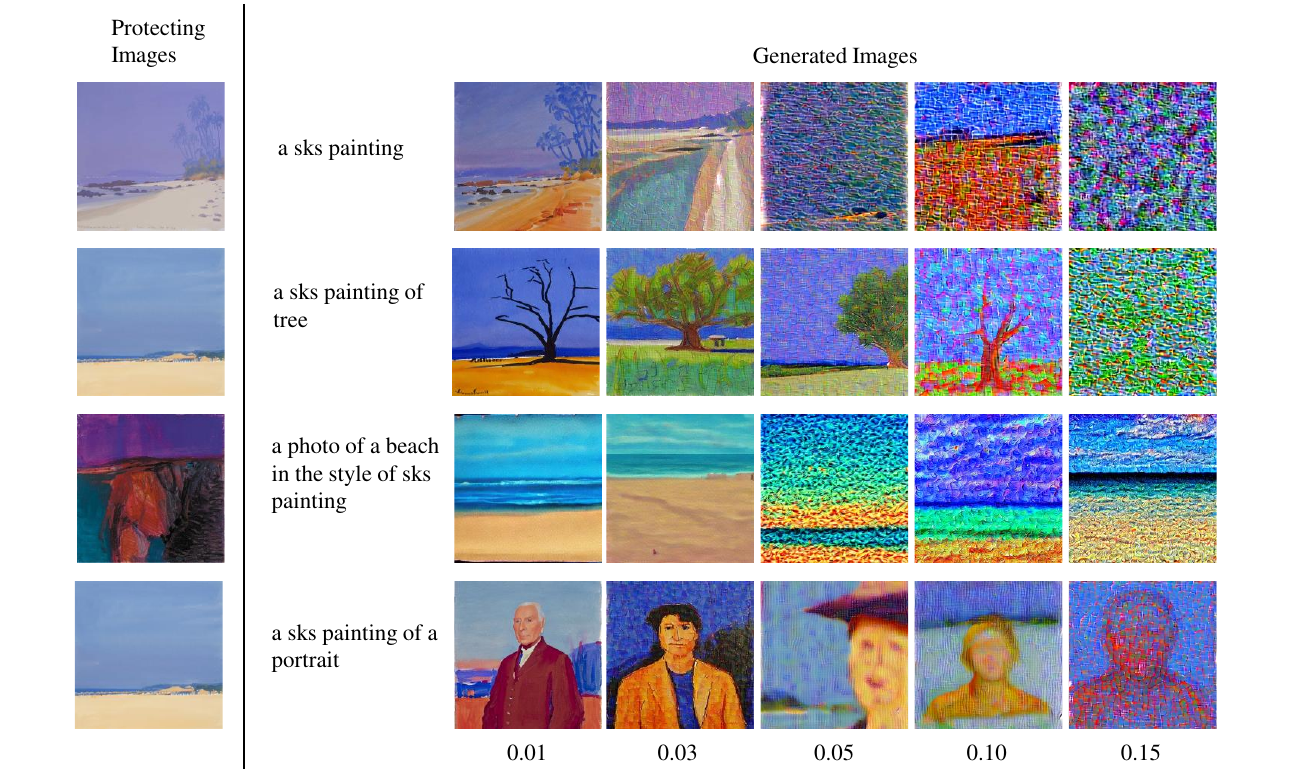}}
\vspace{-0.4cm}
\caption{Left: clean examples. Right: generated images of our method with different noise budgets (ranging from 0.01 to 0.15) and prompts on Wikiart.
}
\vspace{-12pt}
\label{fig10}
\end{center}
\end{figure*}


\begin{figure*}[ht]
\begin{center}
\centerline{\includegraphics[width=\linewidth]{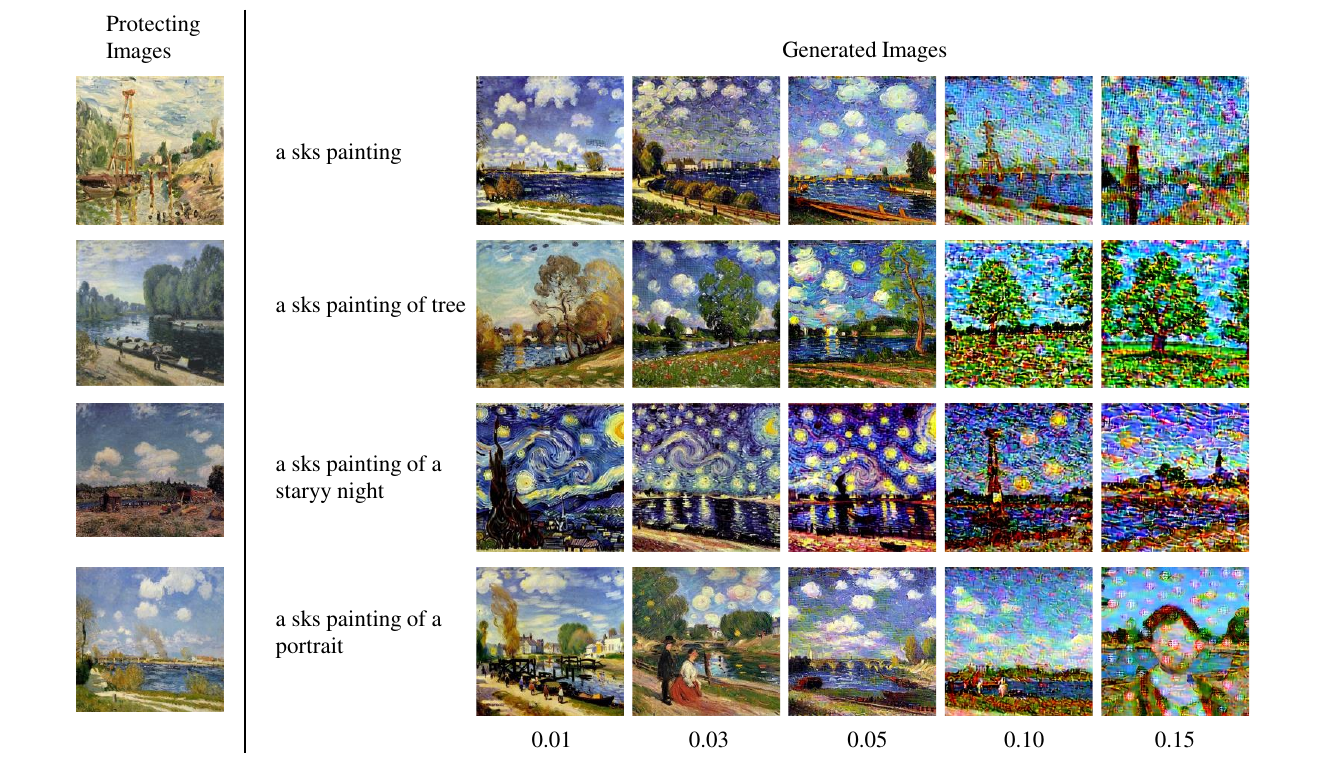}}
\vspace{-0.4cm}
\caption{Left: clean examples. Right: generated images of our method with different noise budgets (ranging from 0.01 to 0.15) and prompts on Wikiart.
}
\vspace{-12pt}
\label{fig11}
\end{center}
\end{figure*}


\begin{figure*}[ht]
\begin{center}
\centerline{\includegraphics[width=0.85\linewidth]{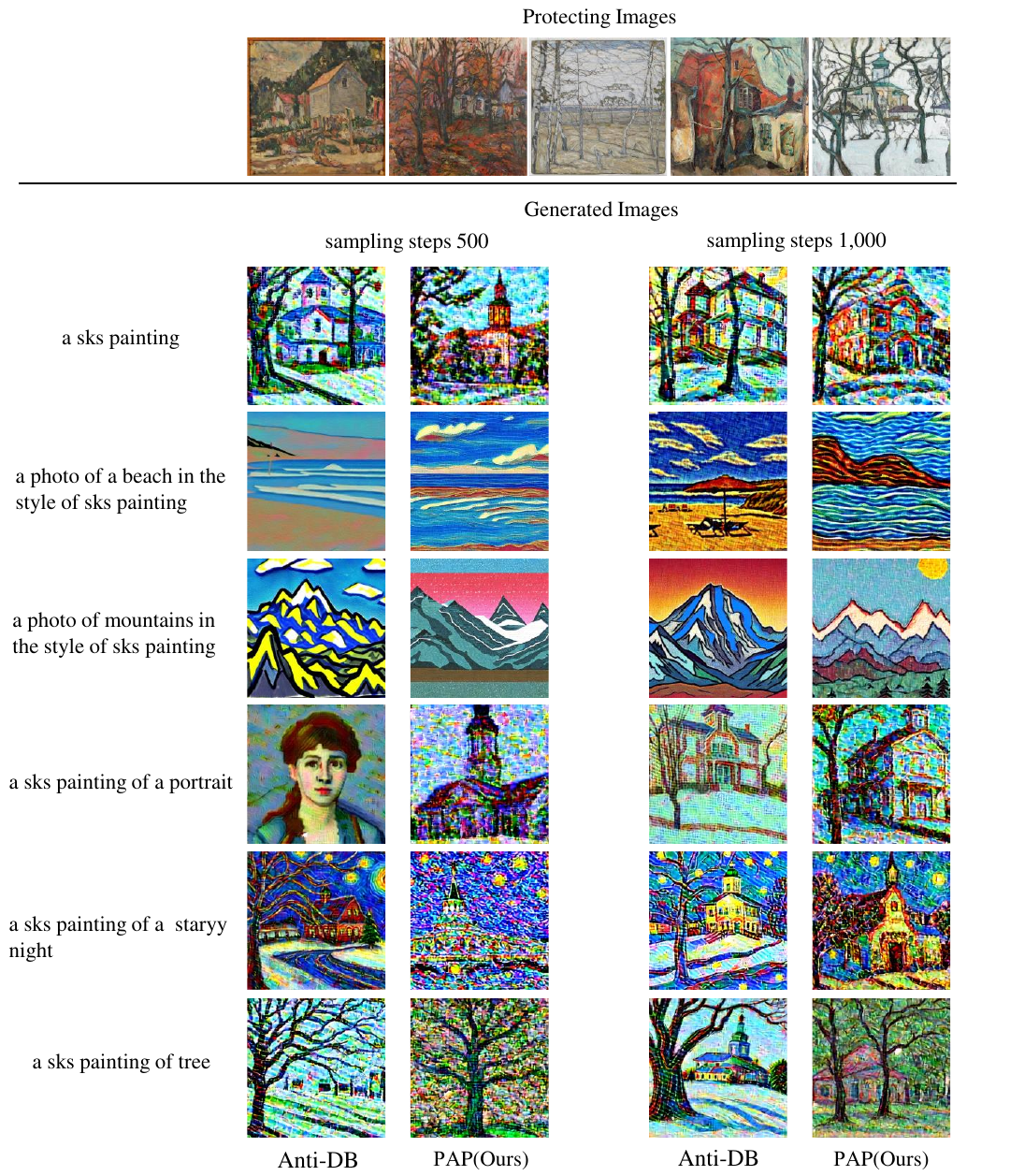}}
\vspace{-0.4cm}
\caption{Top: clean examples. Bottom: generated images of our method with different prompts and sampling steps (500 and 1,000) on Wikiart.
}
\vspace{-12pt}
\label{fig12}
\end{center}
\end{figure*}

\clearpage

\section*{NeurIPS Paper Checklist}

\begin{enumerate}

\item {\bf Claims}
    \item[] Question: Do the main claims made in the abstract and introduction accurately reflect the paper's contributions and scope?
    \item[] Answer: \answerYes{} 
    \item[] Justification: \justification{The main claims made in the abstract and introduction accurately reflect the paper’s contributions and scope.}
    \item[] Guidelines:
    \begin{itemize}
        \item The answer NA means that the abstract and introduction do not include the claims made in the paper.
        \item The abstract and/or introduction should clearly state the claims made, including the contributions made in the paper and important assumptions and limitations. A No or NA answer to this question will not be perceived well by the reviewers. 
        \item The claims made should match theoretical and experimental results, and reflect how much the results can be expected to generalize to other settings. 
        \item It is fine to include aspirational goals as motivation as long as it is clear that these goals are not attained by the paper. 
    \end{itemize}

\item {\bf Limitations}
    \item[] Question: Does the paper discuss the limitations of the work performed by the authors?
    \item[] Answer: \answerYes{} 
    \item[] Justification: \justification{Please see Appendix \ref{limits}.}
    \item[] Guidelines:
    \begin{itemize}
        \item The answer NA means that the paper has no limitation while the answer No means that the paper has limitations, but those are not discussed in the paper. 
        \item The authors are encouraged to create a separate "Limitations" section in their paper.
        \item The paper should point out any strong assumptions and how robust the results are to violations of these assumptions (e.g., independence assumptions, noiseless settings, model well-specification, asymptotic approximations only holding locally). The authors should reflect on how these assumptions might be violated in practice and what the implications would be.
        \item The authors should reflect on the scope of the claims made, e.g., if the approach was only tested on a few datasets or with a few runs. In general, empirical results often depend on implicit assumptions, which should be articulated.
        \item The authors should reflect on the factors that influence the performance of the approach. For example, a facial recognition algorithm may perform poorly when image resolution is low or images are taken in low lighting. Or a speech-to-text system might not be used reliably to provide closed captions for online lectures because it fails to handle technical jargon.
        \item The authors should discuss the computational efficiency of the proposed algorithms and how they scale with dataset size.
        \item If applicable, the authors should discuss possible limitations of their approach to address problems of privacy and fairness.
        \item While the authors might fear that complete honesty about limitations might be used by reviewers as grounds for rejection, a worse outcome might be that reviewers discover limitations that aren't acknowledged in the paper. The authors should use their best judgment and recognize that individual actions in favor of transparency play an important role in developing norms that preserve the integrity of the community. Reviewers will be specifically instructed to not penalize honesty concerning limitations.
    \end{itemize}

\item {\bf Theory Assumptions and Proofs}
    \item[] Question: For each theoretical result, does the paper provide the full set of assumptions and a complete (and correct) proof?
    \item[] Answer: \answerYes{} 
    \item[] Justification: \justification{ We provide the full set of assumptions and a complete (and correct) proof for each theoretical result.}
    \item[] Guidelines:
    \begin{itemize}
        \item The answer NA means that the paper does not include theoretical results. 
        \item All the theorems, formulas, and proofs in the paper should be numbered and cross-referenced.
        \item All assumptions should be clearly stated or referenced in the statement of any theorems.
        \item The proofs can either appear in the main paper or the supplemental material, but if they appear in the supplemental material, the authors are encouraged to provide a short proof sketch to provide intuition. 
        \item Inversely, any informal proof provided in the core of the paper should be complemented by formal proofs provided in appendix or supplemental material.
        \item Theorems and Lemmas that the proof relies upon should be properly referenced. 
    \end{itemize}

    \item {\bf Experimental Result Reproducibility}
    \item[] Question: Does the paper fully disclose all the information needed to reproduce the main experimental results of the paper to the extent that it affects the main claims and/or conclusions of the paper (regardless of whether the code and data are provided or not)?
    \item[] Answer: \answerYes{} 
    \item[] Justification: \justification{The paper fully discloses all the information needed to reproduce the main experimental results of the paper.}
    \item[] Guidelines:
    \begin{itemize}
        \item The answer NA means that the paper does not include experiments.
        \item If the paper includes experiments, a No answer to this question will not be perceived well by the reviewers: Making the paper reproducible is important, regardless of whether the code and data are provided or not.
        \item If the contribution is a dataset and/or model, the authors should describe the steps taken to make their results reproducible or verifiable. 
        \item Depending on the contribution, reproducibility can be accomplished in various ways. For example, if the contribution is a novel architecture, describing the architecture fully might suffice, or if the contribution is a specific model and empirical evaluation, it may be necessary to either make it possible for others to replicate the model with the same dataset, or provide access to the model. In general. releasing code and data is often one good way to accomplish this, but reproducibility can also be provided via detailed instructions for how to replicate the results, access to a hosted model (e.g., in the case of a large language model), releasing of a model checkpoint, or other means that are appropriate to the research performed.
        \item While NeurIPS does not require releasing code, the conference does require all submissions to provide some reasonable avenue for reproducibility, which may depend on the nature of the contribution. For example
        \begin{enumerate}
            \item If the contribution is primarily a new algorithm, the paper should make it clear how to reproduce that algorithm.
            \item If the contribution is primarily a new model architecture, the paper should describe the architecture clearly and fully.
            \item If the contribution is a new model (e.g., a large language model), then there should either be a way to access this model for reproducing the results or a way to reproduce the model (e.g., with an open-source dataset or instructions for how to construct the dataset).
            \item We recognize that reproducibility may be tricky in some cases, in which case authors are welcome to describe the particular way they provide for reproducibility. In the case of closed-source models, it may be that access to the model is limited in some way (e.g., to registered users), but it should be possible for other researchers to have some path to reproducing or verifying the results.
        \end{enumerate}
    \end{itemize}

\item {\bf Open access to data and code}
    \item[] Question: Does the paper provide open access to the data and code, with sufficient instructions to faithfully reproduce the main experimental results, as described in supplemental material?
    \item[] Answer: \answerYes{} 
    \item[] Justification: \justification{The paper provides open access to the data and code, with sufficient instructions to faithfully reproduce the main experimental results}
    \item[] Guidelines:
    \begin{itemize}
        \item The answer NA means that paper does not include experiments requiring code.
        \item Please see the NeurIPS code and data submission guidelines (\url{https://nips.cc/public/guides/CodeSubmissionPolicy}) for more details.
        \item While we encourage the release of code and data, we understand that this might not be possible, so “No” is an acceptable answer. Papers cannot be rejected simply for not including code, unless this is central to the contribution (e.g., for a new open-source benchmark).
        \item The instructions should contain the exact command and environment needed to run to reproduce the results. See the NeurIPS code and data submission guidelines (\url{https://nips.cc/public/guides/CodeSubmissionPolicy}) for more details.
        \item The authors should provide instructions on data access and preparation, including how to access the raw data, preprocessed data, intermediate data, and generated data, etc.
        \item The authors should provide scripts to reproduce all experimental results for the new proposed method and baselines. If only a subset of experiments are reproducible, they should state which ones are omitted from the script and why.
        \item At submission time, to preserve anonymity, the authors should release anonymized versions (if applicable).
        \item Providing as much information as possible in supplemental material (appended to the paper) is recommended, but including URLs to data and code is permitted.
    \end{itemize}

\item {\bf Experimental Setting/Details}
    \item[] Question: Does the paper specify all the training and test details (e.g., data splits, hyperparameters, how they were chosen, type of optimizer, etc.) necessary to understand the results?
    \item[] Answer: \answerYes{} 
    \item[] Justification: \justification{The paper specifies all the training and test details necessary to understand the results.}
    \item[] Guidelines:
    \begin{itemize}
        \item The answer NA means that the paper does not include experiments.
        \item The experimental setting should be presented in the core of the paper to a level of detail that is necessary to appreciate the results and make sense of them.
        \item The full details can be provided either with the code, in appendix, or as supplemental material.
    \end{itemize}

\item {\bf Experiment Statistical Significance}
    \item[] Question: Does the paper report error bars suitably and correctly defined or other appropriate information about the statistical significance of the experiments?
    \item[] Answer: \answerYes{} 
    \item[] Justification: \justification{The paper reports error bars suitably and correctly defined about the statistical significance of the experiments in Appendix \ref{ap:eb}}
    \item[] Guidelines:
    \begin{itemize}
        \item The answer NA means that the paper does not include experiments.
        \item The authors should answer "Yes" if the results are accompanied by error bars, confidence intervals, or statistical significance tests, at least for the experiments that support the main claims of the paper.
        \item The factors of variability that the error bars are capturing should be clearly stated (for example, train/test split, initialization, random drawing of some parameter, or overall run with given experimental conditions).
        \item The method for calculating the error bars should be explained (closed form formula, call to a library function, bootstrap, etc.)
        \item The assumptions made should be given (e.g., Normally distributed errors).
        \item It should be clear whether the error bar is the standard deviation or the standard error of the mean.
        \item It is OK to report 1-sigma error bars, but one should state it. The authors should preferably report a 2-sigma error bar than state that they have a 96\% CI, if the hypothesis of Normality of errors is not verified.
        \item For asymmetric distributions, the authors should be careful not to show in tables or figures symmetric error bars that would yield results that are out of range (e.g. negative error rates).
        \item If error bars are reported in tables or plots, The authors should explain in the text how they were calculated and reference the corresponding figures or tables in the text.
    \end{itemize}

\item {\bf Experiments Compute Resources}
    \item[] Question: For each experiment, does the paper provide sufficient information on the computer resources (type of compute workers, memory, time of execution) needed to reproduce the experiments?
    \item[] Answer: \answerYes{} 
    \item[] Justification: \justification{For each experiment, the paper provides sufficient information on the computer resources needed to reproduce the experiments.}
    \item[] Guidelines:
    \begin{itemize}
        \item The answer NA means that the paper does not include experiments.
        \item The paper should indicate the type of compute workers CPU or GPU, internal cluster, or cloud provider, including relevant memory and storage.
        \item The paper should provide the amount of compute required for each of the individual experimental runs as well as estimate the total compute. 
        \item The paper should disclose whether the full research project required more compute than the experiments reported in the paper (e.g., preliminary or failed experiments that didn't make it into the paper). 
    \end{itemize}
    
\item {\bf Code Of Ethics}
    \item[] Question: Does the research conducted in the paper conform, in every respect, with the NeurIPS Code of Ethics \url{https://neurips.cc/public/EthicsGuidelines}?
    \item[] Answer: \answerYes{} 
    \item[] Justification: The research conducted in the paper conform, in every respect, with the NeurIPS Code of Ethics.
    \item[] Guidelines:
    \begin{itemize}
        \item The answer NA means that the authors have not reviewed the NeurIPS Code of Ethics.
        \item If the authors answer No, they should explain the special circumstances that require a deviation from the Code of Ethics.
        \item The authors should make sure to preserve anonymity (e.g., if there is a special consideration due to laws or regulations in their jurisdiction).
    \end{itemize}

\item {\bf Broader Impacts}
    \item[] Question: Does the paper discuss both potential positive societal impacts and negative societal impacts of the work performed?
    \item[] Answer: \answerYes{} 
    \item[] Justification: The paper discusses both potential positive societal impacts and negative societal impacts of the work performed in Appendix \ref{ap:is}.
    \item[] Guidelines:
    \begin{itemize}
        \item The answer NA means that there is no societal impact of the work performed.
        \item If the authors answer NA or No, they should explain why their work has no societal impact or why the paper does not address societal impact.
        \item Examples of negative societal impacts include potential malicious or unintended uses (e.g., disinformation, generating fake profiles, surveillance), fairness considerations (e.g., deployment of technologies that could make decisions that unfairly impact specific groups), privacy considerations, and security considerations.
        \item The conference expects that many papers will be foundational research and not tied to particular applications, let alone deployments. However, if there is a direct path to any negative applications, the authors should point it out. For example, it is legitimate to point out that an improvement in the quality of generative models could be used to generate deepfakes for disinformation. On the other hand, it is not needed to point out that a generic algorithm for optimizing neural networks could enable people to train models that generate Deepfakes faster.
        \item The authors should consider possible harms that could arise when the technology is being used as intended and functioning correctly, harms that could arise when the technology is being used as intended but gives incorrect results, and harms following from (intentional or unintentional) misuse of the technology.
        \item If there are negative societal impacts, the authors could also discuss possible mitigation strategies (e.g., gated release of models, providing defenses in addition to attacks, mechanisms for monitoring misuse, mechanisms to monitor how a system learns from feedback over time, improving the efficiency and accessibility of ML).
    \end{itemize}
    
\item {\bf Safeguards}
    \item[] Question: Does the paper describe safeguards that have been put in place for responsible release of data or models that have a high risk for misuse (e.g., pretrained language models, image generators, or scraped datasets)?
    \item[] Answer: \answerNA{} 
    \item[] Justification: \justification{The paper does not pose any risks requiring safeguards for responsible release of data or models with a high risk for misuse.}
    \item[] Guidelines:
    \begin{itemize}
        \item The answer NA means that the paper poses no such risks.
        \item Released models that have a high risk for misuse or dual-use should be released with necessary safeguards to allow for controlled use of the model, for example by requiring that users adhere to usage guidelines or restrictions to access the model or implementing safety filters. 
        \item Datasets that have been scraped from the Internet could pose safety risks. The authors should describe how they avoided releasing unsafe images.
        \item We recognize that providing effective safeguards is challenging, and many papers do not require this, but we encourage authors to take this into account and make a best faith effort.
    \end{itemize}

\item {\bf Licenses for existing assets}
    \item[] Question: Are the creators or original owners of assets (e.g., code, data, models), used in the paper, properly credited and are the license and terms of use explicitly mentioned and properly respected?
    \item[] Answer: \answerYes{} 
    \item[] Justification: 
   the creators or original owners of assets, used in the paper, properly credited and are the license and terms of use explicitly mentioned and properly respected.
    \item[] Guidelines:
    \begin{itemize}
        \item The answer NA means that the paper does not use existing assets.
        \item The authors should cite the original paper that produced the code package or dataset.
        \item The authors should state which version of the asset is used and, if possible, include a URL.
        \item The name of the license (e.g., CC-BY 4.0) should be included for each asset.
        \item For scraped data from a particular source (e.g., website), the copyright and terms of service of that source should be provided.
        \item If assets are released, the license, copyright information, and terms of use in the package should be provided. For popular datasets, \url{paperswithcode.com/datasets} has curated licenses for some datasets. Their licensing guide can help determine the license of a dataset.
        \item For existing datasets that are re-packaged, both the original license and the license of the derived asset (if it has changed) should be provided.
        \item If this information is not available online, the authors are encouraged to reach out to the asset's creators.
    \end{itemize}

\item {\bf New Assets}
    \item[] Question: Are new assets introduced in the paper well documented and is the documentation provided alongside the assets?
    \item[] Answer: \answerYes{} 
    \item[] Justification: \justification{We propose a new algorithm and provide accompanying codes. Please refer to the supplementary materials submitted for detailed codes.}
    \item[] Guidelines:
    \begin{itemize}
        \item The answer NA means that the paper does not release new assets.
        \item Researchers should communicate the details of the dataset/code/model as part of their submissions via structured templates. This includes details about training, license, limitations, etc. 
        \item The paper should discuss whether and how consent was obtained from people whose asset is used.
        \item At submission time, remember to anonymize your assets (if applicable). You can either create an anonymized URL or include an anonymized zip file.
    \end{itemize}

\item {\bf Crowdsourcing and Research with Human Subjects}
    \item[] Question: For crowdsourcing experiments and research with human subjects, does the paper include the full text of instructions given to participants and screenshots, if applicable, as well as details about compensation (if any)? 
    \item[] Answer: \answerNA{} 
    \item[] Justification: \justification{The paper does not involve crowdsourcing experiments or research with human subjects.}
    \item[] Guidelines:
    \begin{itemize}
        \item The answer NA means that the paper does not involve crowdsourcing nor research with human subjects.
        \item Including this information in the supplemental material is fine, but if the main contribution of the paper involves human subjects, then as much detail as possible should be included in the main paper. 
        \item According to the NeurIPS Code of Ethics, workers involved in data collection, curation, or other labor should be paid at least the minimum wage in the country of the data collector. 
    \end{itemize}

\item {\bf Institutional Review Board (IRB) Approvals or Equivalent for Research with Human Subjects}
    \item[] Question: Does the paper describe potential risks incurred by study participants, whether such risks were disclosed to the subjects, and whether Institutional Review Board (IRB) approvals (or an equivalent approval/review based on the requirements of your country or institution) were obtained?
    \item[] Answer: \answerNA{} 
    \item[] Justification: \justification{The paper does not involve research with human subjects}
    \item[] Guidelines:
    \begin{itemize}
        \item The answer NA means that the paper does not involve crowdsourcing nor research with human subjects.
        \item Depending on the country in which research is conducted, IRB approval (or equivalent) may be required for any human subjects research. If you obtained IRB approval, you should clearly state this in the paper. 
        \item We recognize that the procedures for this may vary significantly between institutions and locations, and we expect authors to adhere to the NeurIPS Code of Ethics and the guidelines for their institution. 
        \item For initial submissions, do not include any information that would break anonymity (if applicable), such as the institution conducting the review.
    \end{itemize}

\end{enumerate}



\end{document}